\newtheorem{problem}{Problem}
\newtheorem{definition}{Definition}
\newtheorem{lemma}{Lemma}
\newcommand{\discord}{{\textit{discord} }}
\begin{document}
\title{Series2Graph: Graph-based Subsequence Anomaly Detection for Time Series}

\author{Paul Boniol}
\affiliation{%
  \institution{EDF R\&D, LIPADE, Université de Paris\\ \small{boniol.paul@gmail.com}}
}

\author{Themis Palpanas}
\affiliation{%
  \institution{LIPADE, Université de Paris \& \\ French University Institute (IUF)\\ \small{themis@mi.parisdescartes.fr}}
}

\begin{abstract}
Subsequence anomaly detection in long sequences is an important problem with applications in a wide range of domains. However, the approaches that have been proposed so far in the literature have severe limitations: they either require prior domain knowledge that is used to design the anomaly discovery algorithms, or become cumbersome and expensive to use in situations with recurrent anomalies of the same type. In this work, we address these problems, and propose an unsupervised method suitable for domain agnostic subsequence anomaly detection. Our method, Series2Graph, is based on a graph representation of a novel low-dimensionality embedding of subsequences. Series2Graph needs neither labeled instances (like supervised techniques), nor anomaly-free data (like zero-positive learning techniques), and identifies anomalies of varying lengths. The experimental results, on the largest set of synthetic and real datasets used to date, demonstrate that the proposed approach correctly identifies single and recurrent anomalies without any prior knowledge of their characteristics, outperforming by a large margin several competing approaches in accuracy, while being up to orders of magnitude faster.
This paper has appeared in VLDB 2020.
\end{abstract}

\keywords{Time series, Data series, Subsequence anomalies, Outliers.}

\maketitle


\section{Introduction}
\label{sec:intro}

Data series\footnote{A data series 
is an ordered sequence of real-valued points. 
If the dimension that imposes the ordering of the sequence is time then we talk about \emph{time series}, but it could also be mass (e.g., mass spectrometry), angle (e.g., astronomy), or position (e.g., biology). 
In this paper, we will use the terms \emph{time series}, \emph{data series}, and \emph{sequence} interchangeably.} anomaly detection is a crucial problem with application in a wide range of domains~\cite{itisareport,DBLP:journals/dagstuhl-reports/BagnallCPZ19}. 
Examples of such applications can be found in manufacturing, astronomy, engineering, and other domains~\cite{Palpanas:2015:DSM:2814710.2814719,itisareport},
including detection of abnormal heartbeats in cardiology~\cite{DBLP:conf/healthcom/HadjemNK16}, wear and tear in bearings of rotating machines~\cite{IMSGroundtruth},
machine degradation in manufacturing~\cite{ThemisPaper2013},
hardware and software failures in data center monitoring~\cite{DBLP:journals/pvldb/PelkonenFCHMTV15},
mechanical faults in vehicle operation monitoring~\cite{HelicopterPaperBigData2019} and
identification of transient noise in gravitational wave detectors~\cite{31ffca1e82e94ed797d33e02a8a36dff}. 
This implies a real need by relevant applications for developing methods that can accurately and efficiently achieve this goal.

\noindent{\bf [Anomaly Detection in Sequences]} 
Anomaly detection is a well studied task~\cite{statisticaloutliers, DBLP:conf/vldb/SubramaniamPPKG06, DBLP:conf/icdm/YehZUBDDSMK16,valmodjournal} that can be tackled by either examining single values, or sequences of points. 
In the specific context of sequences, which is the focus of this paper, we are interested in identifying anomalous subsequences~\cite{DBLP:conf/icdm/YehZUBDDSMK16, DBLP:conf/edbt/Senin0WOGBCF15}, which are not single abnormal values, but rather an abnormal \emph{sequence} of values. 
In real-world applications, this distinction becomes crucial: 
in certain cases, even though each individual point may be normal, the trend exhibited by the sequence of these same values may be anomalous. 
Failing to identify such situations could lead to severe problems that may only be detected when it is too late~\cite{IMSGroundtruth}.

\noindent{\bf [Limitations of Previous Approaches]} 
Some existing techniques explicitly look for a set of pre-determined types of anomalies~\cite{DBLP:conf/healthcom/HadjemNK16, dD2019604}.
These are techniques that have been specifically designed to operate in a particular setting, they require domain expertise, and cannot generalize.

Other techniques identify as anomalies the subsequences with the largest distances to their nearest neighbors (termed discords)~\cite{DBLP:conf/icdm/YehZUBDDSMK16, DBLP:conf/edbt/Senin0WOGBCF15}. 
The assumption is that the most distant subsequence is completely isolated from the "normal" subsequences. 
However, this definition fails when an anomaly repeats itself (approximately the same)~\cite{DBLP:conf/icdm/WeiKX06}. 
In this situation, 
anomalies will have other anomalies as close neighbors, and will not be identified as discords. 
In order to remedy this situation, the $m^{th} discord$ approach has been proposed~\cite{DBLP:journals/kais/YankovKR08}, which takes into account the multiplicity $m$ of the anomalous subsequences that are similar to one another, and marks as anomalies all the subsequences in the same group. 
However, this approach assumes the cardinality of the anomalies to be known, which is not true in practice (otherwise, we need to try several different $m$ values, increasing execution time).  
Furthermore, the majority of the previous approaches require prior knowledge of the anomaly length, 
and their performance 
deteriorates significantly when the correct length value is not used.

\noindent{\bf [Proposed Approach]} 
In this work, we address the aforementioned {issues}, and we propose Series2Graph, an unsupervised method suitable for domain agnostic subsequence anomaly detection. 
Our approach does not need labeled instances (like supervised techniques do), or clean data that do not contain anomalies (like zero-positive learning techniques require). 
It also allows the same model to be used for the detection of anomalies of different lengths.

Series2Graph is based on a graph representation of a novel low-dimensionality embedding of subsequences.
It starts by embedding subsequences into a vector space, where information related to their shapes is preserved. 
This space is then used to extract overlapping trajectories that correspond to recurrent patterns in the data series. 
Subsequently, we construct a graph, whose nodes are derived from the overlapping trajectories, and edges represent transitions (among subsequences in different nodes) that exist in the original series.

Intuitively, this graph encodes all the subsequences of a (single, or collection of) data series, and encodes the recurring patterns in these subsequences. 
This allows us then to differentiate between normal behavior, i.e., frequently occurring patterns, and anomalies, i.e., subsequences that rarely occur in the data series.

Overall, the experimental results (that include comparisons to several state of the art approaches, using a superset of the publicly available datasets used in the literature for subsequence anomaly detection) demonstrate that Series2Graph dominates by a large margin the competitors in accuracy, versatility, and execution time.


\noindent{\bf [Contributions]} 
Our contributions are the following.

\noindent$\bullet$ 
We propose a new formalization for the subsequence anomaly detection problem, which overcomes the shortcomings of existing models. 
Our formalization is based on the intuitive idea that anomalous are the subsequences that are not similar to the common behavior, which we call normal.

\noindent$\bullet$ 
We describe a novel low-dimensionality embedding for subsequences, and use a graph representation for these embeddings.
This representation leads to a natural distinction between recurring subsequences that constitute normal behavior, and rarely occurring subsequences that correspond to anomalies.

\noindent$\bullet$ 
Based on this representation, we develop Series2Graph~\cite{GraphAn}, an unsupervised method for domain agnostic subsequence anomaly detection. 
Series2Graph supports the identification of previously unseen single and recurring anomalies, and can be used to find anomalies of different lengths.

\noindent$\bullet$ 
Finally, we conduct an extensive evaluation using several large and diverse datasets from various domains that demonstrates the effectiveness and efficiency of Series2Graph. 



\section{Preliminaries}
\label{sec:prelim}

\noindent{\bf [Data Series]}
We begin by introducing some formal notations useful for the rest of the paper. 

	A data series $T \in \mathbb{R}^n $ is a sequence of real-valued numbers $T_i\in\mathbb{R}$ $[T_1,T_2,...,T_n]$, where $n=|T|$ is the length of $T$, and $T_i$ is the $i^{th}$ point of $T$.	
We are typically interested in local regions of the data series, known as subsequences.
	A subsequence $T_{i,\ell} \in \mathbb{R}^\ell$ of a data series $T$ is a continuous subset of the values from $T$ of length $\ell$ starting at position $i$. Formally, $T_{i,\ell} = [T_i, T_{i+1},...,T_{i+\ell-1}]$.	

Given two sequences, $A$ and $B$, of the same length, $\ell$, we can calculate their Z-normalized Euclidean distance, $dist$, as follows~\cite{DBLP:conf/kdd/ChiuKL03,DBLP:conf/sdm/MueenKZCW09,Wordrecognition,Whitney,DBLP:conf/kdd/YankovKMCZ07}: $dist = \sqrt{\sum_{i=1}^{\ell} (\frac{A_{i} - \mu_A}{\sigma_A} - \frac{B_{i} - \mu_B}{\sigma_B})^2}$, where $\mu$ and $\sigma$ represent the mean and standard deviation, respectively, of the sequences. 
In the following, we will simply use the term \emph{distance}.

Given a subsequence $T_{i,\ell}$, we say that its $m^{th}$ Nearest Neighbor ($m^{th}$ NN) is $T_{j,\ell}$ if  $T_{j,\ell}$ has the $m^{th}$ shortest distance to $T_{i,\ell}$ among all the subsequences of length $\ell$ in $T$, excluding trivial matches~\cite{DBLP:conf/icdm/ZhuZSYFMBK16}; a trivial match of $T_{i,\ell}$ is a subsequence $T_{a,\ell}$, where $|i-a| < \ell/2$ (i.e., the two subsequences overlap by more than half their length).

\noindent{\bf [Discords and their Shortcomings]}
We now define the data series \emph{discord}, which is the most prevalent subesquence anomaly definition in the literature.
\begin{definition}[Discord]\cite{DBLP:journals/kais/YankovKR08,DBLP:conf/edbt/Senin0WOGBCF15,DBLP:conf/icdm/YehZUBDDSMK16,valmodjournal}
	A subsequence $T_{i,\ell}$ is a discord if the distance between its NN, namely  $T_{j,\ell}$, is the largest among all the NN distances computed between subsequences of length $\ell$ in $T$. We require that $T_{j,\ell}$ is not a trivial match of $T_{i,\ell}$. 
\end{definition}
\begin{definition}[$m^{th}$-Discord]\cite{DBLP:journals/kais/YankovKR08}
	A subsequence $T_{i,\ell}$ is an $m^{th}$-discord if the distance between its $m^{th}$ NN, namely $T_{i,\ell}$, is the largest among all the $m^{th}$ NN distances computed between subsequences of length $\ell$ in $T$. We require that $T_{i,\ell}$ is not a trivial match of $T_{i,\ell}$.  
\end{definition}

Subsequence anomaly detection based on discords has attracted a lot of interest in the past years. 
There exist several works that have proposed fast and scalable discord discovery algorithms in various settings~\cite{DBLP:conf/edbt/Senin0WOGBCF15,Keogh2007,Liu2009,DBLP:conf/adma/FuLKL06,DBLP:conf/icdm/YehZUBDDSMK16,DBLP:conf/sdm/BuLFKPM07,DBLP:conf/icdm/YankovKR07,Parameter-Free_Discord}, including simple and $m^{th}$-discords (the authors of these papers define the problem as $k^{th}$-discord discovery), in-memory and disk-aware techniques, exact and approximate algorithms.

The strength of this definition is its mathematical simplicity. 
Nevertheless, we observe that it fails to address some challenges of real use cases. 
The reason is that large datasets may contain several anomalies that repeat themselves (approximately the same). 
In other words, it is likely that an anomaly has another anomaly as its close neighbor, and thus, does not correspond to a discord anymore.

Even though the $m^{th}$-discord definition~\cite{DBLP:journals/kais/YankovKR08} tried to address these problems, the parameter $m$ that refers to the multiplicity of some anomaly in the dataset remains a user-defined parameter, and is not easy to set correctly. 
Choosing an $m$ value that is smaller, or larger, than the correct one, will lead to both false negatives and false positives.

\begin{figure*}
  \centering
  \includegraphics[scale=0.41]{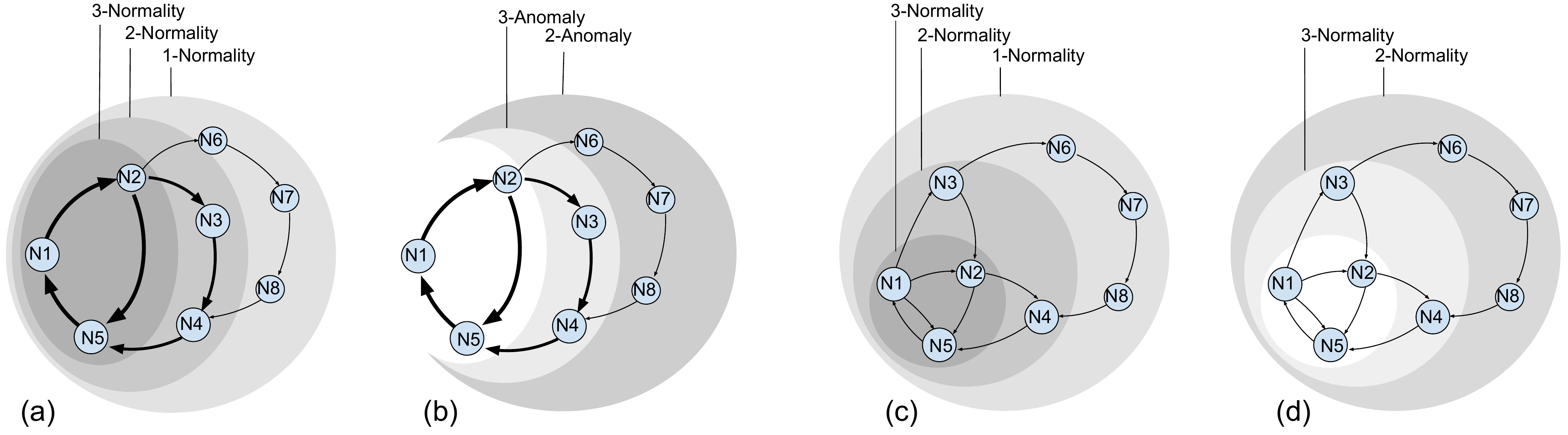}
  \caption{3-Normality, 2-Normality, 1-Normality, and 3-Anomaly, 2-Anomaly for two given graphs ((a),(b) and (c),(d)) representing the simplified model of two data series. Edge weights and node degrees are used to define the $\theta$-Normality and $\theta$-Anomaly subgraphs.}
  \label{fig:k-normality}
\end{figure*}

\noindent{\bf [Graphs]}
We introduce some basic definitions for graphs, which we will use in this paper.



We define a Node Set $\mathcal{N}$ as a set of unique integers.
Given a Node Set $\mathcal{N}$, an Edge Set $\mathcal{E}$ is then a set composed of tuples $(x_i,x_j)$, where $x_i,x_j \in \mathcal{N}$. $w(x_i,x_j)$ is the weight of that edge.

Given a Node Set $\mathcal{N}$, an Edge Set $\mathcal{E}$ (pairs of nodes in $\mathcal{N}$), a Graph $G$ is an ordered pair $G=(\mathcal{N},\mathcal{E})$.
%
A directed graph or digraph $G$ is an ordered pair $G=(\mathcal{N},\mathcal{E})$ where $\mathcal{N}$ is a Node Set, and $\mathcal{E}$ is an ordered Edge Set.

In the rest of this paper, we will only use directed graphs, denoted as $G$.

\section{Problem Formulation}
\label{sec:problem}
We now provide a new formulation for subsequence anomaly detection. 
The idea is that a data series is transformed into a sequence of abstract states (corresponding to different subsequence patterns), represented by nodes $\mathcal{N}$ in a directed graph, $G(\mathcal{N},\mathcal{E})$, where the edges $\mathcal{E}$ encode the number of times one state occurred after another. Thus, normality can be characterized by the (i) the edge weight, which indicates the number of times two subsequences occurred one after the other in the original sequence, and (ii) the node degree, the number of edges adjacent to the node, which indicates the proximity of the subsequences in that node to other subsequences. Note that G is a connected graph (there exists a path between every pair of nodes), and thus, the degree of each node is at least equal to 1. 

Under this formulation, paths in the graph composed of high-weight edges and high-degree nodes correspond to normal behavior. 
As a consequence, the Normality of a data series can be defined as follows.

\begin{definition}[$\theta$-Normality]   
Let a node set be defined as $\mathcal{N} = \{N_1,N_2,...,N_m\}$.
Let also a data series $T$ be represented as a sequence of nodes $\langle N^{(1)},N^{(2)},...,N^{(n)} \rangle$ with $\forall i \in [0,n], N^{(i)} \in \mathcal{N}$ and $m \leq n$. The $\theta$-$Normality$ of $T$ is the subgraph $G^{\nu}_{\theta}(\mathcal{N}_{\nu},\mathcal{E}_{\nu})$ of $G(\mathcal{N},\mathcal{E})$ with $\mathcal{E} = \{(N^{(i)},N^{(i+1)})\}_{i \in [0,n-1]}$, such that: $\mathcal{N}_{\nu} \subset \mathcal{N}$ and $\forall (N^{(i)},N^{(i+1)}) \in \mathcal{E}_{\nu}, w((N^{(i)},N^{(i+1)})).(deg(N^{(i)}) - 1) \geq \theta$.
\label{defNormality}  
\end{definition}

An example of $\theta$-Normality subgraph is shown in Figures~\ref{fig:k-normality}(a) and (c). 
In Figure~\ref{fig:k-normality}(a), the subgraph composed of nodes $N_1,N_2,N_5$, has edges with weights larger than $3$, and a minimum node degree of $2$. Therefore, it is a $3$-$Normality$ subgraph. 
In Figure~\ref{fig:k-normality}(c), the subgraph composed of nodes $N_1,N_2,N_5$, has edges of weight $1$, but does not have any node with a degree under $4$. 
Therefore, it is a $3$-$Normality$ subgraph.
Similarly, we define an anomaly as follows.

\begin{definition}[$\theta$-Anomaly]
Let a node set be defined as $\mathcal{N} = \{N_1,N_2,...,N_m\}$.
Let a data series $T$ be represented as a sequence of nodes $ \langle N^{(1)},N^{(2)},...,N^{(n)} \rangle $ with $\forall i \in [0,n], N^{(i)} \in \mathcal{N}$ and $m \leq n$. The $\theta$-$Anomaly$ of $T$ is the subgraph $G^{\alpha}_{\theta}(\mathcal{N}_{\alpha},\mathcal{E}_{\alpha})$ of $G(\mathcal{N},\mathcal{E})$with $\mathcal{E} = \{(N^{(i)},N^{(i+1)})\}_{i \in [0,n-1]}$, such that: $G^{\nu}_{\theta}(\mathcal{N}_{\nu},\mathcal{E}_{\nu}) \cap G^{\alpha}_{\theta}(\mathcal{N}_{\alpha},\mathcal{E}_{\alpha}) = \emptyset$.
\label{defAnomaly}  
\end{definition}

An example of $\theta$-$Anomaly$ subgraph is outlined in Figures~\ref{fig:k-normality}(b) and (d). 
In Figure~\ref{fig:k-normality}(b), the nodes that do not belong to the $3$-$Normality$ subgraph  constitute the $3$-$Anomaly$ subgraph. 
The $2$-Anomaly subgraph is included in the $3$-$Anomaly$ subgraph, and the intersection of the $2$-Anomaly and  $2$-$Normality$ subgraphs is empty. 
Similar observations hold for Figure~\ref{fig:k-normality}(d).
We now define the membership criteria of a subsequence to a $\theta$-Normality subgraph.

\begin{definition}[$\theta$-Normality Membership]   
Given a data series  $T$ represented as a sequence of abstract states $ \langle N^{(1)},N^{(2)},...,N^{(n)} \rangle $, a subsequence $T_{i,\ell}$, represented by $\langle N^{(i)},N^{(i+1)},...,N^{(i+\ell)} \rangle$, belongs to the $\theta$-$Normality$ of $T$ if and only if $\forall j \in [i,i+\ell], (N^{(j)},N^{(j+1)}) \in \theta$-$Normality(T)$. On the contrary, $T_{i,\ell}$  belongs to the $\theta$-$Anomaly$ of $T$ if and only if $\exists j \in [i,i+\ell], (N^{(j)},N^{(j+1)}) \notin \theta$-$Normality(T)$.
\label{defBelonging}  
\end{definition}

Based on the above definitions, using $\theta$-Normality subgraphs naturally leads to a ranking of subsequences based on their "normality". For practical reasons, this ranking can be transformed into a score, where each rank can be seen as a threshold in that score. We elaborate on this equivalence in the following section.
Observe also that the subsequence length is not involved in the definition of normal/abnormal, which renders this approach more general and flexible.

Note that given the existence of graph $G$, the above definitions imply a way for identifying the anomalous subsequences. 
The problem is now how to construct this graph.
Therefore, the problem we want to solve is the following.
\begin{problem}[Pattern graph construction]
	Given a data series $T$, we want to automatically construct the graph $G(\mathcal{N},\mathcal{E})$. 
\end{problem}


Following common practice, we assume that anomalies correspond to rare events.

Table~\ref{SymbolTable} summarizes the symbols we use in this paper. 

\begin{table}[tb]
\centering
\scalebox{0.95}{
\begin{tabular}{|c|c|}
\hline
{\bf Symbol} & {\bf Description} \\
\hline
$T$									& a data series \\
$|T|$									& cardinality of $T$ \\
$\lambda$ 							& convolution size\\
$\ell$ 								& input subsequence length\\
$\ell_q$ 								& query subsequence length\\
$\ell_A$ 								& anomaly length\\
$\mathcal{N}$							& set of nodes\\
$\mathcal{E}$							& set of edges\\
$G(\mathcal{N},\mathcal{E})$			& directed graph corresponding to $T$\\
$\theta$ 								& density layer (for normality/anomaly) \\
$\theta$-$Normality$						& subgraph of $G$ (also called $G^{\nu}_{\theta}$)\\
$\theta$-$Anomaly$						& subgraph of $G$ (also called $G^{\alpha}_{\theta}$)\\
$\mathcal{N}_{\nu}$						& set of nodes of $\theta$-$Normality$ \\
$\mathcal{E}_{\nu}$						& set of edges of $\theta$-$Normality$\\
$\mathcal{N}_{\alpha}$					& set of nodes of $\theta$-$Anomaly$\\
$\mathcal{E}_{\alpha}$					& set of edges of $\theta$-$Anomaly$\\
$w(e)$								& weight of edge $e \in \mathcal{E}$\\
$deg(N_i)$							& degree of node $N_i \in \mathcal{N}$\\
$Proj$								& set of all embedded subsequences \\
$Proj_r$								& reduced set $Proj$ of three dimensions\\
$SProj$								& rotated $Proj_r$\\
$\psi$ 								& angle\\
$\mathbf{\Psi}$ 						& angle set\\
$\mathcal{I}_{\psi}$						& radius set of angle $\psi$ \\
$\mathcal{N}_{\psi}$						& node set in $\mathcal{I}_{\psi}$\\
\hline
\end{tabular}
} 
\caption{Table of symbols}
\label{SymbolTable}
\end{table}

\section{Proposed Approach}
\label{sec:solution}

In this section, we describe Series2Graph, our unsupervised solution to the subsequence anomaly detection problem. 
For a given data series $T$, the overall Series2Graph process is divided into four main steps as follows (video examples of this process are available online~\cite{ourWebsite}).

\begin{enumerate}
\item {\bf Subsequence Embedding}: Project all the 
subsequences (of a given length $\ell$) of $T$ in a two-dimensional space, where 
shape similarity is preserved. 
\item {\bf Node Creation}: Create a node for each one of the densest parts of the above two-dimensional space. These nodes can be seen as a summarization of all the major patterns of length $\ell$ that occurred in $T$. 
\item {\bf Edge Creation}: Retrieve all transitions between pairs of subsequences represented by two different nodes: each transition corresponds to a pair of subsequences, where one occurs immediately after the other in the input data series $T$. 
We represent transitions with an edge between the corresponding nodes. 
The weights of the edges are set to the number of times the corresponding pair of subsequences was observed in $T$.
\item {\bf Subsequence Scoring}: Compute the normality (or anomaly) score of a subsequence of length $\ell_q \geq \ell$ (within or outside of $T$), based on the previously computed edges/nodes and their weights/degrees.
\end{enumerate}

We note that length $\ell$ required in the first step of the method is user defined, but is independent of the length of the subsequences that we want to detect as anomalies, which can have different lengths.
Moreover, the proposed approach is robust to the choice of $\ell$, especially when the $\ell$ value is larger than the length of the targeted anomalies.
In contrast, existing methods require prior knowledge of the anomaly length and can only discover anomalies of that length.
We demonstrate these points in the experimental evaluation.
 
Below, we describe in detail each one of the above steps.



\subsection{Subsequence Embedding}

We first describe our approach for projecting a data series into a two-dimensional space. 
We propose a new shape-based embedding, such that two subsequences similar in shape will be geometrically close in the transformed space after the embedding. 
In order to achieve this, we (i) extract all the subsequences and represent them as vectors, (ii) reduce the dimensionality of these vectors to three dimensions (that we can visualize in a three-dimensional space), (iii) rotate the space of these vectors (i.e., subsequences) such that two of the components contain the shape related characteristic, and the last one the average value. 
As a result, two subsequences with similar shape, but very different mean value (i.e., small Z-normalized Euclidean distance, but large Euclidean distance) will have very close values for the first two components, but very different for the third one.


We start by extracting subsequences using a sliding window that we slide by one point at a time\footnote{This is equivalent to using a {\it time delay embedding}~\cite{Kantz:2003:NTS:1121581} with a delay $\tau=1$.}, and then applying a local convolution (of size\footnote{We use $\lambda=\ell / 3$ in all our experiments. Varying $\lambda$ between $\ell / 10$ and $\ell / 2$ does not affect the results (we omit this graph for brevity).} $\lambda=\ell / 3$) to each subsequence, in order to reduce noise and highlight the important shape information. 
Formally, for a given subsequence length $\ell$ and local convolution size $\lambda=\ell / 3$, we transform subsequence $T_{i,\ell}$ into a vector (of size $\ell - \lambda$):
\[\bigg[  \sum_{k=i}^{i+\lambda} T_k,\sum_{k=i+1}^{i+1+\lambda} T_k, ... ,\sum_{k=i+\ell-\lambda}^{i+\ell} T_k \bigg]
\]

We insert the vectors corresponding to all subsequences $T_{i,\ell}$ 
in matrix $Proj(T,\ell,\lambda) \in \mathbb{M}_{|T|,\ell-\lambda}(\mathbb{R})$, where $\mathbb{M}$ is the set of real-valued matrices with $|T|$ rows and $\ell-\lambda$ columns.

In order to reduce the dimensionality of matrix $Proj(T,\ell,\lambda)$, we apply a Principal Component Analysis (PCA) transform.
For the sake of simplicity, we keep only the first three components ($PCA_3$), and denote the reduced three-column matrix as $Proj_r(T,\ell,\lambda)$.

We note that using the first three components was sufficient for our study. 
Consider that for the $25$ datasets used in our experimental evaluation, the three most important components explain on average $95$\% of the total variance. 
Generalizing our solution to a larger number of important components is part of our current work.

\begin{figure*}
  \centering
  \includegraphics[scale=0.65]{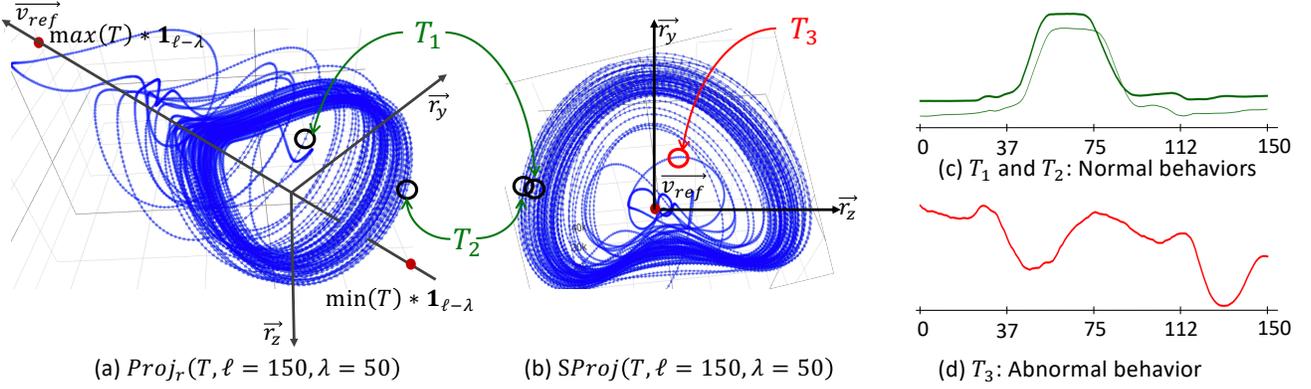}
  \caption{(a) $Proj_r(T,\ell,\lambda)$ and (b) $SProj(T,\ell,\lambda)$ of a data series $T$ corresponding to the movement of an actor's hand that (c) takes a gun out of the holster and points to a target (normal behavior); (d) the anomaly (red subsequence) corresponds to a moment when the actor missed the holster~\protect\cite{Keogh:2004:TPD:1014052.1014077}. We rotate (a) into (b) such that $\vec{v}_{ref}$ is invariant in two dimensions.}
  \label{fig:rotationExample}
\end{figure*}

%

Since we are interested in subsequence anomalies, which correspond to anomalous shapes (trends), we need to emphasize (out of the three components obtained by the aforementioned reduced projection) the components that explain the most the shape of the subsequences. 
Let $min(T)$ and $max(T)$ be the minimum and maximum values of the data series $T$. 
We extract the vector $\vec{v}_{ref} = \overrightarrow{O_{mn}O_{mx}}$, where $O_{mn} = PCA_3(min(T)*\lambda*\mathbf{1}_{\ell-\lambda})$ and $O_{mx} = PCA_3(max(T)*\lambda*\mathbf{1}_{\ell-\lambda})$ ($PCA_3$ returns the three most important components using the trained PCA applied on $Proj(T,\ell,\lambda)$).
Intuitively, the vector $\vec{v}_{ref}$ describes the time dimension along which the values change (bounded by $\lambda*min(T)$ and $\lambda*max(T)$, where the multiplication with $\lambda$ corresponds to a local convolution). The other dimensions (orthogonal vectors of $\vec{v}_{ref}$) describe how the values change. Thus, overlapping points/sequences in these other dimensions indicate recurrent behaviors and isolated points/sequences indicate possible anomalies.
Given the unit vectors $(\vec{u}_x,\vec{u}_y,\vec{u}_z)$ that represent the axes of the cartesian coordinate system of the PCA,
the angle $\phi_x = \angle \vec{u}_x \vec{v}_{ref}$, $\phi_y = \angle \vec{u}_y \vec{v}_{ref}$, $\phi_z = \angle \vec{u}_z \vec{v}_{ref}$ and their corresponding rotation matrices $R_{u_x}(\phi_x)$,$R_{u_y}(\phi_y)$ and $R_{u_z}(\phi_z)$,  we define $SProj(T,\ell,\lambda)$ as follows: 
$SProj(T,\ell,\lambda) = R_{u_x}(\phi_x)R_{u_y}(\phi_y)R_{u_z}(\phi_z)Proj_r(T,\ell,\lambda)^T$.
The matrix $SProj(T,\ell,\lambda)$ is the reduced projection $Proj_r(T,\ell,\lambda)$ rotated in order to have the unit vector $\vec{u}_x$ aligned to the offset vector $\vec{v}_{ref}$. 


Figure~\ref{fig:rotationExample} depicts the rotation procedure to transform $Proj_r$ into $SProj$ for an example data series $T$ that corresponds to the movement of an actor's hand that takes a gun out of the holster and points to a target (normal behavior).
This rotation is using vector $\vec{v}_{ref}$, defined by the minimal and maximal constant sequences mentioned earlier (marked with the red dots in Figure~\ref{fig:rotationExample}(a)). 
The unit vectors of the rotated space are $(\frac{\vec{v}_{ref}}{||\vec{v}_{ref}||}, \vec{r}_{y},\vec{r}_{z})$, where $\vec{r}_{y}$ and $\vec{r}_{z}$ are the rotated vectors $\vec{u}_{y}$ and $\vec{u}_{z}$. 

What this rotation achieves is that (similarly to Z-normalization) subsequences with a different mean but the same shape in the space before the rotation (e.g., subsequences $T_1$ and $T_2$ in Figure~\ref{fig:rotationExample}(c)) will have very close $\vec{r}_{y}$ and $\vec{r}_{z}$ components in the new coordinate system (as shown in Figures~\ref{fig:rotationExample}(a) and (b)).
Therefore, subsequences with similar shapes will appear close together, shapes that repeat often in the dataset will form dense clusters in the space (like subsequences $T_1$ and $T_2$), and rare shapes (anomalies) will appear relatively isolated (like subsequence $T_3$).
Figures~\ref{fig:rotationExample}(c) and (d) depict the normal ($T_1$ and $T_2$) and abnormal ($T_3$) subsequences.
The anomaly ($T_3$) corresponds to a case when the actor missed the holster~\cite{Keogh:2004:TPD:1014052.1014077}.

We observe that in the rotated space (see Figure~\ref{fig:rotationExample}(b)), the shape differences are easy to distinguish, and the identification of normal behavior (dense clusters of repeated patterns) and anomalies (isolated patterns) is visually obvious. 

In the rest of the paper, $SProj(T,\ell,\lambda)$ will refer to the 2-dimensions matrix keeping only the $r_{y}$ and $r_z$ components.


\IncMargin{0.5em}
\begin{algorithm}[tb]
{\small
    \caption{\textbf{Pattern Embedding}}\label{alg:PatternEmbedding}
    \SetKwInOut{Input}{input}
    \SetKwInOut{Output}{output}
    \Input{Data series $T$, input length $\ell$, $\lambda$}
    \Output{3-dimension points sequence $SProj$}
    \BlankLine
    \tcp{Transform first subsequence}
    $P$ $\leftarrow$ $\bigg(  \sum_{k=j}^{j+\lambda} T_k    \bigg)_{j \in [0,\ell - \lambda]}$\;
    add $P$ in $Proj$\;
    \tcp{Transform every other subsequences in T}
    \ForEach{$i \in [1,|T| - \ell]$ }
    {
        $P[0:\ell - \lambda - 1]$ $\leftarrow$ $P[1:\ell - \lambda]$\; \label{storePreviousRes}
        $P[\ell - \lambda]$ $\leftarrow$ $ \sum_{k=i+\ell - \lambda}^{i+\ell} T_k$\;
        add $P$ in $Proj$\;
    }
    \tcp{Reduce to three dimensions}
    $pca$ $\leftarrow$ $PCA_3.fit(Proj)$\; \label{PCA}
    $Proj$ $\leftarrow$ $pca.transform(Proj)$\; 
	\tcp{Get rotation characteristics}    
    $v_{ref}$ $\leftarrow$ $pca.transform((max(T) - min(T))*\lambda*\mathbf{1}_{\ell-\lambda})$\;
    $\phi_x$,$\phi_y$,$\phi_z$ $\leftarrow$  $getAngle(({u}_x,{u}_y,{u}_z),{v}_{ref})$\;
    $R_{u_x}$,$R_{u_y}$,$R_{u_z}$ $\leftarrow$  $GetRotationMatrices(\phi_x,\phi_y,\phi_z)$\; \label{RotationMatrices}
	\tcp{Rotate SProj}    
    $SProj$ $\leftarrow$ $R_{u_x}.R_{u_y}.R_{u_z}.Proj^T$
} 
\end{algorithm}
\DecMargin{0.5em}

Algorithm~\ref{alg:PatternEmbedding} describes the computation of the pattern embeddings. 
A naive solution is to compute all the convolutions for all the subsequences of $T$, which leads to a complexity of magnitude $O(|T|\ell\lambda)$.
Nevertheless, by using the previously computed convolutions (Line~\ref{storePreviousRes}), the complexity is reduced to $O(|T|\lambda)$. 
The PCA step of Algorithm~\ref{alg:PatternEmbedding} is implemented with a randomized truncated Singular Value Decomposition (SVD), using the method of Halko et al.~\cite{SVDHALKO} with a time complexity of $O(|T|(\ell-\lambda)|component|)$. 
The last step consists of matrix multiplications, and therefore has a complexity of $O(|T||R_{u_x}|^2)$. 
In our case, the size of the rotation matrices are much smaller than $\lambda$, which leads to a global complexity of $O(3|T|(\ell - \lambda))$.

\subsection{Node Creation}

At this point, we are ready to extract shape related information, as in Figure \ref{fig:rotationExample}, where recurrent and isolated trajectories can be distinguished. 
The idea is to extract the most {\it crossed} sections of the 2-dimensional space defined by the unit vector $(\vec{r}_{y},\vec{r}_{z})$.  
These sections will be the nodes in the graph we want to construct. 

First, we define the \emph{radius subset}.
\begin{definition}[Radius Set]
   Given a data series $T$ and its projection matrix $P=SProj(T,\ell,\lambda)$, the radius set $\mathcal{I}_{\psi}$ is the set of intersection points between the vector $\vec{u}_{\psi} = cos(\psi)\vec{r}_y + sin(\psi)\vec{r}_z$ and every segment $[x_{i-1},x_{i}]$, where $x_{i-1},x_{i}$ are two consecutive rows of P: 
    $\mathcal{I}_{\psi} = \big\{ x \big| (\vec{u}_{\psi} \times \vec{x} = \vec{0}) \land (\overrightarrow{x_{i-1}x} \times \overrightarrow{x_{i-1}x_{i}} = \vec{0})  \big\}$, where $\times$ operator is the cross product.
    \label{defAnomalies} 
\end{definition}

\begin{figure}[tb]
	\centering
	\includegraphics[scale=0.60]{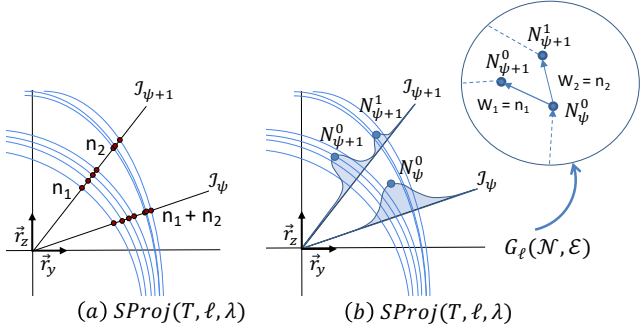}
	\caption{Node extraction from $SProj(T,\ell,\lambda)$ by measuring the density of the intersected trajectories to a given vector. The densest points are added to the Node Set $\mathcal{N}$.}
	\label{fig:Node_extraction}
\end{figure}

Figure \ref{fig:Node_extraction} (a) displays two radius subsets (marked with the red points). 
We can now define the \emph{Pattern Node Set} as follows.

\begin{definition}[Pattern Node Set]
    Given a data series $T$, its projection $P=SProj(T,\ell,\lambda)$ and a set of $\mathcal{I}_{\psi}$ ($\psi \in \mathbf{\Psi}$), the Pattern Node Set of T is:
    \begin{align*}
    \mathcal{N} &= \cup_{\psi \in \mathbf{\Psi}} \mathcal{N}_{\psi} \\
    \mathcal{N}_{\psi} &= \big\{ x \big| \exists \epsilon, \forall y, |x-y| > \epsilon \implies f_h(x,\mathcal{I}_{\psi}) > f_h(y,\mathcal{I}_{\psi}) \big\} \\
    \text{with } &f_h(x,\mathcal{I}_{\psi}) = \frac{1}{nh\sqrt{2 \pi \sigma(\mathcal{I}_{\psi})^2}} \sum_{x_i \in \mathcal{I}_{\psi}} e^{\frac{(x-x_i - h\mu(\mathcal{I}_{\psi}))^2}{2h\sigma(\mathcal{I}_{\psi})^2}}
    \end{align*}
    \label{defAnomalies} 
\end{definition}
In the above definition, $f$ is a kernel density estimation function, applied on a radius subset using a Gaussian distribution. 
Then, nodes become the areas in the 2-dimensional space, where the trajectories of the patterns are the most likely to pass through. 
In other words, each node corresponds to a set of very similar patterns.
The bandwidth parameter $h$ affects the granularity of the extraction: the smaller the $h$ value is, the more local maxima, and therefore the more nodes we will end up with; the larger the $h$ value is, the less nodes the graph will have, and therefore the more general it will be.
We define parameter $r=|\mathbf{\Psi}|$ as the number of angles $\psi$ that we use in order to extract the pattern node set. In other words, this parameter is sampling the space (refer to Algorithm~\ref{alg:NodeExtraction}, Line~\ref{RS_noa}).
Once again, a big number of angles will lead to high precision, but at the cost of increased computation time. 

In practice, we observed that parameter $r$ is not critical, and we thus set $r=50$ for the rest of this paper. 
Regarding the bandwidth parameter of the density estimation, we set it following the Scott's rule~\cite{scott:1992}:
$h_{scott} = \sigma(\mathcal{I}_{\psi}) . |\mathcal{I}_{\psi}|^{-\frac{1}{5}}$.

\IncMargin{0.5em}
\begin{algorithm}[tb]
	{\small
    \caption{\textbf{Node Extraction}}\label{alg:NodeExtraction}
    \SetKwInOut{Input}{input}
    \SetKwInOut{Output}{output}
    \Input{2-dimensional point sequence $SProj$, rate $r$, bandwidth $h$ }
    \Output{Node Set $\mathcal{N}$}
    \BlankLine
    \tcp{Set the number of radius}
    $\mathbf{\Psi}$ $\leftarrow$ $\big( i\frac{2\pi}{r} \big)_{i \in [0,r]}$\; \label{RS_noa}
    $\mathcal{N}$ $\leftarrow$ $\{\}$\;
    \ForEach{$\psi \in \mathbf{\Psi}$ }
    {
        $\mathcal{I}_{\psi}$ $\leftarrow$ $[]$\;
        \ForEach{$i \in [0,|SProj|-1]$ }
        {
        	\tcp{Find intersected points}
            $radius$ $\leftarrow$ $max_{x,y}(SProj_{i},SProj_{i+1})$\; \label{RS_fill_b}
            $P_{\psi}$ $\leftarrow$ $(radius_x.cos(\psi),radius_y.sin(\psi))$\;
            add $Intersect((\Omega, P_{\psi}),(SProj_{i},SProj_{i+1}))$ in $\mathcal{I}_{\psi}$ \label{RS_fill_e}
        }
        \tcp{Extract Nodes}
        $\mathcal{N}_{\psi}$ $\leftarrow$  $argmax_{x \in \mathcal{I}_{\psi}} f_h(x,\mathcal{I}_{\psi})$\;
        add $\mathcal{N}_{\psi}$ in $\mathcal{N}$\;
    }
} 
\end{algorithm}
\DecMargin{0.5em}

Algorithm~\ref{alg:NodeExtraction} outlines the above process for extracting the node set from $SProj$. 
In Lines~\ref{RS_fill_b}-\ref{RS_fill_e}, we compute for each radius subset $\mathcal{I}_{\psi}$ all intersection points between a radius vector and the possible segments composed of two consecutive points in $SProj$. 
The complexity of this operation is bounded by $O(|SProj|r) \simeq O(|T|r)$. 
The time complexity of the kernel density estimation is $O(|\mathcal{I}_{\psi}|)$ (since $|\mathcal{I}_{\psi}| \leq |SProj|$). 
Actually, we experimentally observed that $|\mathcal{I}_{\psi}| << |SProj|$.
Therefore, the overall time complexity is bounded by $O(|T|r)$. 
We can improve this complexity using the following observation. 
Instead of checking the intersection with every possible radius, we can select those that bound the position of the points $i$ and $i+1$ in $SProj$ (only the radius with $\psi$ between $\psi_i = \angle \vec{u_x}. \vec{SProj_{i}}$ and $\psi_{i+1}=\angle \vec{u_x}. \vec{SProj_{i+1}}$).
Therefore, the worst case complexity becomes $O(|T|r)$, and the best case complexity is reduced to $O(|T|)$.

\subsection{Edge Creation}

Once we identify the nodes, we need to compute the edges among them. 
Recall that the set of extracted nodes corresponds to all the possible states, where subsequences of the data series $T$ can be. 
In order to compute the transitions among these nodes, we loop through the entire projection $SProj(T,\ell,\lambda)$ and we extract the sequence $\langle N^{(0)},N^{(1)}, ... , N^{(n)} \rangle$ of the nodes $N_i$ in $\mathcal{N}$ that the embedded subsequences $(SProj(T,\ell,\lambda)_0,SProj(T,\ell,\lambda)_1, ... ,SProj(T,\ell,\lambda)_n)$ belong to.
Intuitively, the above node sequence involves all the nodes in $\mathcal{N}$ (some of them more than once) and represents the entire input data series. 
We use this sequence to identify the edges of the graph $G_{\ell}$ we want to construct.
In practice, we extract the edges (all the pairs of successive nodes in the above sequence) and set their weights to the number of times the edge is observed in the sequence.
Formally, the edges set $\mathcal{E}$ is defined as follows.

\begin{definition}[Pattern Edges Set]
    Given a data series $T$, its projection $P=SProj(T,\ell,\lambda)$ and its Pattern Node Set $\mathcal{N}$, the edges set $\mathcal{E}$ is equal to:
    $\mathcal{E} = \big\{ \big( S(P_i),S(P_{i+1} ) \big) \big\}_{i \in [1,|P|-1]}$,
    where  function $S$ finds for a given projection point, the closest node in $\mathcal{N}$. Formally:
    $S(x) = argmin_{n \in \mathcal{N}} d(x,n) $, where $x \in P$ and $d$ is the geometrical Euclidean distance.
    \label{defAnomalies} 
\end{definition}

Since the weight of each edge is equal to the cardinality of this edge in the edge set $\mathcal{E}$, this weight is proportional to the number of times two subsequences follow each other in the input data series. 
For efficiency, $S(x)$ is computed as follows: for a given projection point, we first find the node subset $N_{\psi}$ of $\mathcal{N}$ (with $\psi \in \mathbf{\Psi}$), such that $|\angle \vec{x} \vec{u_{\psi}}|$ is minimal. We then compute $S(x)$ such as $S(x) = argmin_{n \in \mathcal{N}_{\psi}} |x.\vec{u_{\psi}} - n |$, where $\vec{x}.\vec{u_{\psi}}$ is the scalar product between $\vec{x}$ and $\vec{u_{\psi}}$.
As depicted in Figures~\ref{fig:Node_extraction}(a) and (b), a total of $n_1+n_2$ subsequences are intersected by $\mathcal{I}_{\psi}$ and represented by node $N_{\psi}^{0}$. 
At $\mathcal{I}_{\psi +1}$, these subsequences are divided between nodes $N_{\psi+1}^{0}$ ($n_1$ subsequences) and $N_{\psi+1}^{1}$ ($n_2$ subsequences). 
Therefore, we have $w(N_{\psi}^{0},N_{\psi+1}^{0})=n_1$ and $w(N_{\psi}^{0},N_{\psi+1}^{1})=n_2$.

\IncMargin{0.5em}
\begin{algorithm}[tb]
	{\small
    \caption{\textbf{Edge Extraction}}\label{alg:EdgeExtraction}
    \SetKwInOut{Input}{input}
    \SetKwInOut{Output}{output}
    \Input{2-dimension points sequence $SProj$, and a node set $\mathcal{N}$ }
    \Output{a Edge Set $\mathcal{E}$}
    \BlankLine
    $\mathbf{\Psi}$ $\leftarrow$ $\big( i\frac{2\pi}{r} \big)_{i \in [0,r]}$\;
    $NodeSeq$ $\leftarrow$ []\; 
    \ForEach{$i \in [0,|SProj|-1]$ }
    {
    	\tcp{Get the radius that bound SProj$_{i}$ and SProj$_{i+1}$}
        $\psi_i$ $\leftarrow$ $getAngle(\vec{u_x},SProj_{i})$\;   
        $\psi_{i+1}$ $\leftarrow$ $getAngle(\vec{u_x},SProj_{i+1})$\; 
        \ForEach{$(\psi \in \mathbf{\Psi}) \land (\psi \in [\psi_i,\psi_{i+1}])$ }
        {
        	\tcp{Fill the sequence of node NodeSeq}
            $radius$ $\leftarrow$ $max_{x,y}(SProj_{i},SProj_{i+1})$\;
            $P_{\psi}$ $\leftarrow$ $(radius_x.cos(\psi),radius_y.sin(\psi))$\;
            $x_{int}$ $\leftarrow$ $Intersect((\Omega, P_{\psi}),(SProj_{i},SProj_{i+1}))$\; 
            $n_{int}$ $\leftarrow$ $argmin_{n \in \mathcal{N}_{\psi}}(|x_{int} - n|)$\; 
            add $n_{int}$ in $NodeSeq$
    }
    \tcp{Extract edges from NodeSeq}
    $\mathcal{E}$ $\leftarrow$  $\big\{ (NodeSeq_{i},NodeSeq_{i+1}) \big\}_{i \in [0,|fullPath|]}$\;
    }
}
\end{algorithm}
\DecMargin{0.5em}

Algorithm~\ref{alg:EdgeExtraction} outlines the steps we follow to extract the edges among the nodes in $\mathcal{N}$. 
For each point in the input data series $T$, we identify the radius it belongs to and we choose the closest node. 
Therefore, the complexity is bounded by $O(|T|)$ and varies based on the number of radius we have to check and the number of nodes in each $\mathcal{N}_{\psi}$. 
The former is bounded by parameter $r$: on average, we have no more than $|T|/r$ points per $\mathcal{N}_{\psi}$. 
The overall complexity is in the worst case $O(|T|^2)$, and in the best case $O(|T|)$. 
We note that this worst case corresponds to the situation where each subsequence in $T$ belongs to a different node. 
This is not what we observe in practice: for all our datasets, the overall complexity is close to the best case. 


\subsection{Subsequence Scoring}

We now describe how we can use the information in the graph to identify the normal and anomalous behaviors.


\noindent{\bf [Subsequence to Path Conversion]}
We start with the conversion of a subsequence to a path in a given graph. 
For an already computed graph $G_{\ell}(\mathcal{N},\mathcal{E})$, we define function $Time2Path(G_{\ell},T_{i,\ell_q})$ that converts a subsequence $T_{i,\ell_q}$ into a path (i.e., a sequence of nodes) in $G_{\ell}$, by (i) computing the pattern embedding $SP$ of $T_{i,\ell_q}$ (using the PCA transformation and rotation matrices, computed in Lines~\ref{PCA} and~\ref{RotationMatrices}, respectively, of Algorithm~\ref{alg:PatternEmbedding}), and (ii) extracting the edges using $EdgeExtraction(SP,\mathcal{N})$ (output of Algorithm~\ref{alg:EdgeExtraction}) on the node set $\mathcal{N}$ of graph $G_{\ell}$.



\noindent{\bf [Normality Score]}
We are now ready to measure normality. 
As mentioned earlier, modeling a data series using a cyclic graph results in the graph encoding information concerning the recurrence of subsequences. 
Then, the path normality score function can be defined as follows.
\begin{definition}[Path Normality Score]
    Given a data series $T$ and its graph $G_{\ell}(\mathcal{N},\mathcal{E})$, and a subsequence $T_{i,\ell_q}$ of length $\ell_{q} \geq \ell$, the normality of a path $P_{th} = Time2Path(G_{\ell}(\mathcal{N},\mathcal{E}),T_{i,\ell_q}) = \langle N^{(i)},N^{(i+1)}, ..., N^{(i+\ell_q)} \rangle $ is equal to:
    $Norm(P_{th}) = \sum_{j=i}^{i+\ell_q-1} \frac{w(N^{(j)},N^{(j+1)})(deg(N^{(j)})-1) }{\ell_q}$
    \label{defNormalPath} 
\end{definition}

We can thus infer a normality score for subsequences in $T$ using the $Time2Path$ function defined earlier (the opposite of this normality score is the anomaly score). 
Formally, the normality score is defined as follows.
\begin{definition}[Subsequence Normality Score]
    Given a data series $T$, its graph $G_{\ell}(\mathcal{N},\mathcal{E})$ and a subsequence $T_{i,\ell_q}$ of length $\ell_q \geq \ell$, the $Normality$ score $T_{i,\ell_q}$ is equal to:
    $Normality(T_{i,\ell_q}) = Norm(Time2Path(G_{\ell}(\mathcal{N},\mathcal{E}),T_{i,\ell_q}))$
    \label{defNormalScore} 
\end{definition}

Observe that the two previous definitions are consistent with the definition of $\theta$-Normality, such that every $P_{th}$ in $\theta$-Normal subgraph will have $N(P_{th}) \geq \theta$, and every $P_{th}$ that is exclusively in a lower normality level will have $N(P_{th}) \leq \theta$. 
As a matter of fact, the rank generated by the normality score is similar to the $\theta$-$Normality$ rank, and in both rankings, the anomalies are found at the bottom of the ranking. 
The following lemma formalizes this statement.

\begin{lemma}
    Given a data series $T$, its graph $G_{\ell}(\mathcal{N},\mathcal{E})$, a subsequence $T_{i,\ell_q}$, and its path $P_{th}=Time2Path(G_{\ell}(\mathcal{N},\mathcal{E}),T_{i,\ell_q})$, we have:
    $\forall \theta \in \mathbb{N}_{>0},  N(P_{th}) < \theta \implies P_{th} \in \theta\text{-Anomaly}(T)$
    \label{defAnomalies} 
\end{lemma}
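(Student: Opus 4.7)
The plan is to prove the lemma by contrapositive: assume $P_{th} \notin \theta\text{-Anomaly}(T)$ and derive $N(P_{th}) \geq \theta$. By Definition~\ref{defBelonging}, a path fails to lie in the $\theta$-Anomaly subgraph precisely when every consecutive pair of nodes along it belongs to the $\theta$-Normality subgraph. By Definition~\ref{defNormality}, this means that every edge $(N^{(j)},N^{(j+1)})$ of the path satisfies
\[
w(N^{(j)},N^{(j+1)}) \cdot (deg(N^{(j)}) - 1) \;\geq\; \theta.
\]

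With that per-edge lower bound in hand, I would substitute it term-by-term into the definition of the path normality score (Definition~\ref{defNormalPath}). The path $P_{th} = \langle N^{(i)}, N^{(i+1)}, \ldots, N^{(i+\ell_q)} \rangle$ contributes exactly $\ell_q$ summands, so
\[
Norm(P_{th}) \;=\; \sum_{j=i}^{i+\ell_q-1} \frac{w(N^{(j)},N^{(j+1)})\,(deg(N^{(j)})-1)}{\ell_q} \;\geq\; \sum_{j=i}^{i+\ell_q-1} \frac{\theta}{\ell_q} \;=\; \theta.
\]
Thus $N(P_{th}) \geq \theta$, contradicting the hypothesis $N(P_{th}) < \theta$. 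Taking the contrapositive, $N(P_{th}) < \theta$ forces the existence of at least one edge $(N^{(j)},N^{(j+1)})$ with $w \cdot (deg - 1) < \theta$, which by Definition~\ref{defNormality} lies outside the $\theta$-Normality subgraph, and hence by Definition~\ref{defBelonging} places $P_{th}$ in $\theta\text{-Anomaly}(T)$.

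There is no serious obstacle here; the argument is essentially a pigeonhole observation on an average. The only subtlety worth being careful about is bookkeeping: the path on a subsequence of length $\ell_q$ contains $\ell_q$ transitions (edges), which matches the denominator in $Norm$, so the bound $\sum_j \theta/\ell_q = \theta$ is tight. One should also note that the implication is one-directional: $N(P_{th}) \geq \theta$ does not by itself guarantee membership in $\theta\text{-Normality}(T)$, because a high average can mask a single offending edge. The lemma only claims the easier direction, and the averaging argument above suffices.
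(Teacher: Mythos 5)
Your proof is correct and follows essentially the same route as the paper's: both establish that if every edge of $P_{th}$ satisfies the $\theta$-Normality condition of Definition~\ref{defNormality}, then each summand in $Norm(P_{th})$ is at least $\theta/\ell_q$, so the score is at least $\theta$, and then conclude by contraposition via Definition~\ref{defBelonging}. Your added remark that the converse direction fails (a high average can hide one bad edge) is a correct observation, though not needed for the lemma.
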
 

\begin{proof}
Consider a subsequence $T_{i,\ell_q}$ corresponding to $P_{th} = \langle N^{(i)},N^{(i+1)}, ..., N^{(i+\ell_q)} \rangle $. If $P_{th}\in \theta\text{-Normality}(T)$, then according to Definition \ref{defBelonging}, we have: 
\begin{align*}
&\forall j \in [i,i+\ell_q-1], (N^{(j)},N^{(j+1)})  \in \theta\text{-Normality}(T)\\
\implies &\forall j \in [i,i+\ell_q-1], w(N^{(j)},N^{(j+1)}).(deg(N^{(j)})-1)  \geq \theta \\
\implies &\sum_{j=i}^{i+\ell_q-1} \frac{w(N^{(j)},N^{(j+1)}).(deg(N^{(j)}) - 1) }{\ell_q}  \geq \theta
\end{align*}
As a consequence, $P_{th} \notin \theta\text{-Normality}(T)$ and according to Definition \ref{defBelonging}, $P_{th} \in \theta\text{-Anomaly}(T)$.
\end{proof}

Therefore, the subsequences of $T$ with a low score are those that compose the {\it $\theta$-Anomaly} subgraph, where the value of $\theta$ is low (close to one for the discords).
We note that this process identifies both single anomalies (discords) and recurrent anomalies.


\subsection{Series2Graph Summary}




%





Algorithm~\ref{alg:ScoringAlgo} summarizes all the steps of our approach. 
In Lines~\ref{embeddingLine}-\ref{EdgeSetLine}, we compute the subsequence embedding, the node set $\mathcal{N}$ and then the edge set $\mathcal{E}$ in order to build the graph $G_{\ell}$. 
Line~\ref{NormalityScoreLine} computes the $Normality$ score for all subsequences of the input data series: we use a sliding window over the input data series to extract all subsequences, we score each one of them and store the result in the vector  $NormalityScore=[Normality(T_{0,\ell_q}), ... ,Normality(T_{n - \ell_q,\ell_q})]$, initialized in Line~\ref{NormalityScoreInit}. 
Finally, 
we apply a moving average filter 
on the $NormalityScore$ vector (Line~\ref{exp_step}).
This filter tries to rectify possible small inaccuracies of the scoring function, by ensuring that two highly overlapping subsequences will have similar $Normality$ scores (as we would normally expect).


\begin{algorithm}[tb]
	{\small
	\caption{\textbf{Series2Graph}}\label{alg:ScoringAlgo}
	\SetKwInOut{Input}{input}
	\SetKwInOut{Output}{output}
	\Input{data series $T$, input length $\ell$, query length $\ell_q$}
	\Output{a data series $AnomScore$}
	\BlankLine
	$SProj$ $\leftarrow$ $PatternEmbedding(T,\ell,\lambda)$\; \label{embeddingLine}
	$\mathcal{N}$ $\leftarrow$ $NodeExtraction(SProj,r=50,h=h_{opt})$\;
	$\mathcal{E}$ $\leftarrow$ $EdgeExtraction(SProj,\mathcal{N})$\; \label{EdgeSetLine}
	$G_{\ell}$ $\leftarrow$ $Graph(\mathcal{N},\mathcal{E})$\;
	\BlankLine
	\tcp{vector of Normality scores}
	$NormalityScore$ $\leftarrow$ $[0]_{0,|T| - \ell_q}$\; \label{NormalityScoreInit}
	\tcp{compute Normality score for all subsequences of length $\ell_q$ in T}
	\ForEach{$i \in [1,|T| - \ell_q]$ }
	{
		$NormalityScore[i]$ $\leftarrow$ $Norm(Time2Path(G_{\ell},T_{i,\ell_q}))$\; \label{NormalityScoreLine}
	}
	$NormalityScore$ $\leftarrow$ $movingAverage(NormalityScore,\ell)$\; \label{exp_step}
} 
\end{algorithm}

\section{Experimental Evaluation}
\label{sec:exp}

We now present the results of the experimental evaluation with several real datasets from different domains, including all the annotated datasets that have been used in the discord discovery literature. 
In order to ensure the reproducibility of our experiments, we created a web page~\cite{ourWebsite} with the source code, datasets, and other supporting material. 

\subsection{Experimental setup and Datasets}

We have implemented our algorithms~\cite{GraphAn} in C (compiled with gcc 5.4.0). 
The evaluation was conducted on a server with Intel Xeon CPU E5-2650 v4 2.20GHz, and 250GB of RAM. 

We benchmark our approach using different annotated real and synthetic datasets, listed in Table~\ref{datasetInfo}.
Following previous work~\cite{DBLP:journals/tkdd/SeninLWOGBCF18}, we use several synthetic datasets that contain sinusoid patterns at fixed frequency, on top of a random walk trend. 
We then inject different numbers of anomalies, in the form of sinusoid waveforms with different phases and higher than normal frequencies, and add various levels of Gaussian noise on top. 
We refer to those datasets using the label SRW-[\# of anomalies]-[\% of noise]-[length of anomaly] and use them in order to test the performance of the algorithms under different, controlled conditions. 

Our real datasets are:
(i) Simulated engine disks data (SED) collected by the Rotary Dynamics Laboratory at NASA~\cite{doi:10.1177/1475921710395811, doi:10.1117/12.847574}. 
This data series represents disk revolutions recorded over several runs. 
(ii) MIT-BIH Supraventricular Arrhythmia Database (MBA)~\cite{Goldbergere215,Moody}, which are electrocardiogram recordings from 5 different patients, containing multiple instances of two different kinds of anomalies.
(iii) Five additional real datasets from various domains that have been studied in earlier works~\protect\cite{DBLP:conf/icdm/KeoghLF05,DBLP:conf/edbt/Senin0WOGBCF15,DBLP:conf/edbt/Senin0WOGBCF15}, and their anomalies are simple \emph{discords} (usually only 1): aerospace engineering (Space Shuttle Marotta Valve~\cite{DBLP:conf/icdm/KeoghLF05}), gesture recognition (Ann's Gun dataset~\cite{DBLP:conf/edbt/Senin0WOGBCF15}) and medicine (patient's respiration measured by the thorax extension~\cite{DBLP:conf/icdm/KeoghLF05}, and the record 15 of the BIDMC Congestive Heart Failure Database~\cite{DBLP:conf/icdm/KeoghLF05}).

We measure $Top$-$k$ accuracy (i.e., the correctly identified anomalies among the $k$ retrieved by the algorithm, divided by $k$) and execution time.

\begin{table}[tb]
	\centering
	\scalebox{0.90}{
\begin{tabular}{|c|c|c|c|c|}
\hline
{\bf Datasets} & {\bf Length} & $\mathbf{\ell_A}$ & $\mathbf{N_A}$ & {\bf Domain } \\
\hline
\hline
SED			 	& 100K	&75 &  50 & Electronic\\
MBA (803) 		& 100K	&75 &  62 & Cardiology\\
MBA (804) 		& 100K	&75 &  30 & Cardiology\\
MBA (805)		& 100K	&75 &  133 & Cardiology\\
MBA (806)		& 100K	&75 &  27 & Cardiology\\
MBA (820)		& 100K	&75 &  76 & Cardiology\\
MBA (14046)		& 100K	&75 &  142 & Cardiology\\
Marotta Valve 		& 20K &1K &  1 & \makecell{Aerospace\\engineering}\\
Ann Gun 			& 11K & 800 &  1 &  \makecell{Gesture\\recognition}\\
Patient Respiration 	& 24K & 800 &  1 & Medicine\\
BIDMC CHF 	& 15K & 256 &  1 & Cardiology\\
\makecell{SRW-[20-100]-[0\%]-[200]} & 100K & 200 & var. & Synthetic \\
\makecell{SRW-[60]-[5\%-25\%]-[200]}& 100K & 200 & 60 & Synthetic \\
\makecell{SRW-[60]-[0\%]-[100-1600]}& 100K & var. & 60 & Synthetic \\
\hline
\end{tabular}
}
\caption{List of dataset characteristics: data series length, anomaly length ($\ell_A$), number of annotated anomalies ($N_A$) and domain.}
\label{datasetInfo}
\end{table}


\subsection{Length Flexibility}

We first evaluate the influence of the subsequence length parameters $\ell$ and $\ell_q$. 
Ideally, we would like the method to be robust to variations in these parameters. 
This will ensure that the constructed model can accurately identify anomalies, even if these anomalies are of several lengths, significantly different than what was originally expected by the users.
We stress that, in contrast, all previous techniques~\cite{DBLP:conf/edbt/Senin0WOGBCF15,DBLP:conf/icdm/YehZUBDDSMK16,DBLP:conf/icdm/YankovKR07,Breunig:2000:LID:342009.335388,LSTManomaly} require knowledge of the \emph{exact} anomaly length, and are very brittle otherwise. 

Previous approaches developed for discord discovery, like $STOMP$\cite{DBLP:conf/icdm/YehZUBDDSMK16}, are very sensitive to even slight variations of the subsequence length parameter (for the identification of the anomalous subsequences). 
Figure~\ref{fig:multi_length_example} displays the Euclidean distances of each subsequence of the MBA ECG records 803 to its nearest neighbor (computed using STOMP), using lengths 80 and 90, given the length of the anomaly $\ell_A=80$. 
The results demonstrate that a small variation in the input length can lead to very different outcomes (using a length equal to 90, the discord is a normal heart beat, and therefore a false positive).

\begin{figure}[tb]
  \centering
  \includegraphics[scale=0.52]{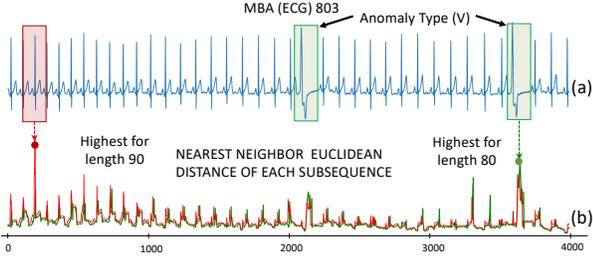}
  \caption{(a) MBA ECG recording (\textit{4000} points snippet from patient 803), with one annotated Supraventricular contraction (V). (b) Euclidean distances of each subsequence (for the entire MBA ECG recording) of length \textit{80} (green) and \textit{90} (red). In this two cases, the subsequences with the highest distances are not the same (length \textit{90}: Normal Beat, length \textit{80}: Anomaly Type V).}
  \label{fig:multi_length_example}
\end{figure}

On the other hand, Series2Graph is robust to variations in the query subsequence length.
Figure~\ref{fig:multi_length_method} depicts the $G_{\ell}(\mathcal{N},\mathcal{E})$ graphs for $\ell$ equals to $80,100,120$, while the anomalies length is $75$ for Type S anomalies, and $120$ for type V anomalies. 
The results show that in all cases, irrespective of the length $\ell$ used to construct the graph, the anomaly trajectories (Type V highlighted in red and Type S highlighted in blue) are distinct from the highly-weighted trajectories (thick black) that correspond to normal behavior. 

\begin{figure}[tb]
	\centering
	\includegraphics[scale=0.30]{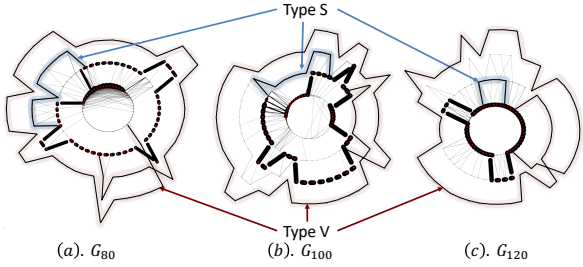}
	\caption{$G_{\ell}(\mathcal{N},\mathcal{E})$ of the MBA(820) electrocardiogram data series for $\ell$ of $80,100$ and $120$. In the three cases, the different kinds of anomalies (S: blue, V: red) are well separable with lower edges weights.}
	\label{fig:multi_length_method}
\end{figure}

In order to confirm this observation, we conduct several experiments. 
First, we measure the $Top$-$k$ accuracy as the input length $\ell$ and the query length $\ell_q$ vary (using a query length $2\ell_q/3 = \ell$, with the anomaly length $\ell_A=80$). 
Figure~\ref{fig:multi_length_exp_2}(a) demonstrates the stable behavior of Series2Graph. 
Even though the $Top$-$k$ accuracy varies for small lengths, 
the performance remains relatively stable when the input lengths $\ell$ we use to construct the graph are larger than the anomaly length $\ell_A$.
This means that simply selecting an $\ell$ value larger than the expected anomaly length $\ell_A$ will lead to good performance.

\begin{figure*}[tb]
  \centering
  \includegraphics[scale=0.66]{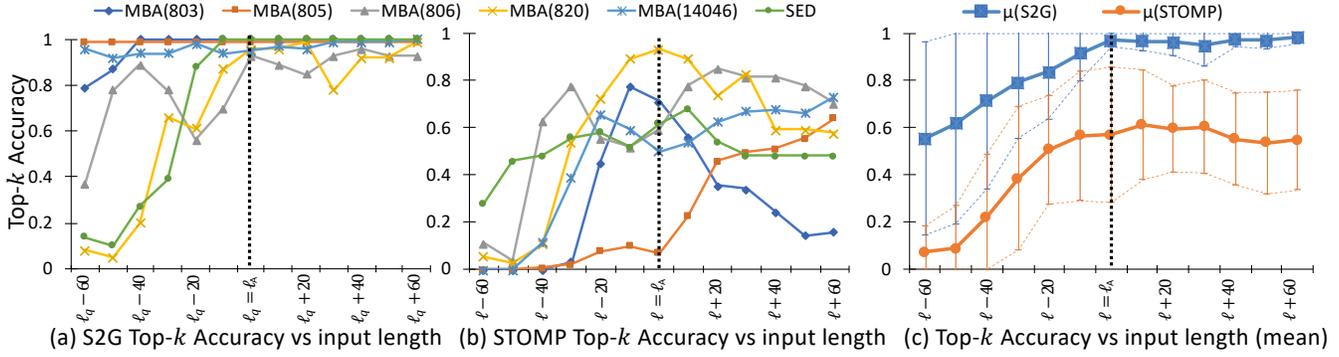}
  \vspace*{-0.3cm}
  \caption{On the MBA and SED datasets: (a) $Top$-$k$ accuracy of Series2Graph varying the input and query length ($2\ell_q/3 = \ell$) to build the graph $G_{\ell}$. (b) STOMP $Top$-$k$ accuracy varying the input length $\ell$, (c) STOMP and Series2Graph in $Top$-$k$ accuracy average compared to the input length $\ell$.}
  \label{fig:multi_length_exp_2}
\end{figure*}

In contrast, as Figure~\ref{fig:multi_length_exp_2}(b) demonstrates, the performance of STOMP (a discord-based approach) varies widely. 
Thus, such approaches need careful tuning, requiring domain expertise and good knowledge of the possible anomaly lengths.
Furthermore, even though STOMP accuracy seems to converge to a stable value as the length is increasing, the Series2Graph accuracy stays significantly higher and much more stable in average, as shown in Figure~\ref{fig:multi_length_exp_2}(c).




\subsection{Optimal Bandwidth}

\begin{figure*}[tb]
	\centering
	\includegraphics[scale=0.66]{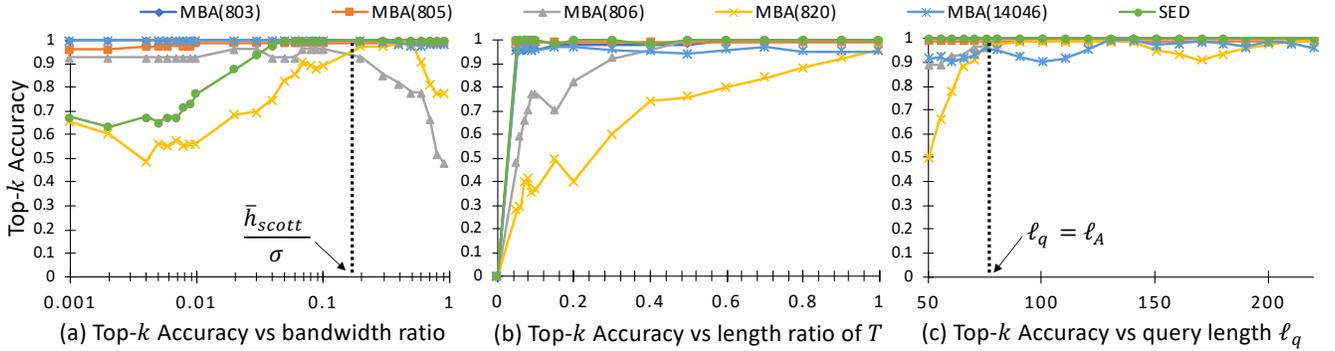}
	\vspace*{-0.4cm}
	\caption{$Top$-$k$ accuracy of Series2Graph on the MBA and SED datasets while: (a) varying the bandwidth ratio $h/\sigma(\mathcal{I}_{\psi})$ in $f_h(x,\mathcal{I})$ (logarithmic scale), (b) varying the length of the prefix snippet used to build the graph, (c) varying the query length $\ell_q$ (starting from the input length $\ell$) to compute the normality score.}
	\label{fig:multi_length_exp_1}
\end{figure*}

We now evaluate the effectiveness of the kernel bandwidth $h_{scott}$ in $f_h(x,\mathcal{I})$ in the node extraction step. 
We set a constant value for $\ell$ and $\ell_q$ ($\ell=80$, $\ell_q=160$), and we measure the accuracy for different bandwidths. 
Figure~\ref{fig:multi_length_exp_1}(a) displays the $Top$-$k$ accuracy on the MBA and SED datasets as a function of $h/\sigma(\mathcal{I}_{\theta})$ (logarithmic scale). 
As expected, a small bandwidth ratio breaks down too much the normal pattern, and therefore reduces its Normality score, while a large bandwidth ratio (above 0.7) hinders some key nodes to detect anomalies in two of the six datasets, namely MBA(806) and MBA(820). 
The anomalies in these two datasets are close to the normal behavior, thus the abnormal trajectories can be easily missed. 
In contrast, using the Scott bandwidth ratio $h_{scott}$ (marked with the dotted line) leads to very good accuracy for all the datasets we tested (We used the datasets with the same anomaly and pattern lengths, so that we can compare Scott bandwidth ratios).

\subsection{Convergence of Edge Set}

In this section, we evaluate Series2Graph accuracy on previously unseen data series (i.e., on different data series than the one used to build the graph $G$, with potentially different anomalies). 
Note that the Normality score of a non-existing pattern in $G$ will be $0$ (see Definitions~\ref{defNormalPath} and~\ref{defNormalScore}) and therefore close to the score of the anomalies 
in $G$.




In the experiment of Figure~\ref{fig:multi_length_exp_1}(b), we build the graph using only a prefix of the input series. 
We then vary the size of this prefix and measure the $Top$-$k$ accuracy for the entire series.
As we can see, the $Top$-$k$ accuracy usually reaches its maximum value without having to build the graph using all the available data. 
We observe that on average $Top$-$k$ accuracy already reaches more than 85\% of its maximum value, when we use as little as 40\% of the input data series. 
Nevertheless, one can also see that the $Top$-$k$ accuracy of MBA(820) and MBA(806) converge slower than the other datasets. 
These two datasets contain anomalies of Type S, which means they are very similar to a normal heartbeat. 
Therefore, Series2Graph requires more subsequences in order to build a model that effectively separates the anomalous behavior from the normal behavior. 

Finally, Figure~\ref{fig:multi_length_exp_1}(c) demonstrates the stability of Series2Graph in the anomaly discovery task as we vary the query subsequence length $\ell_q$ using a fixed input length $\ell$. 
The results show that we can identify anomalies with very high accuracy, by scoring candidate subsequences of a wide range of lengths, provided they are larger than the length of the anomalies ($\ell_q\ge\ell_A$).

\subsection{Discord Identification}

\begin{figure*}[tb]
	\centering
	\includegraphics[scale=0.49]{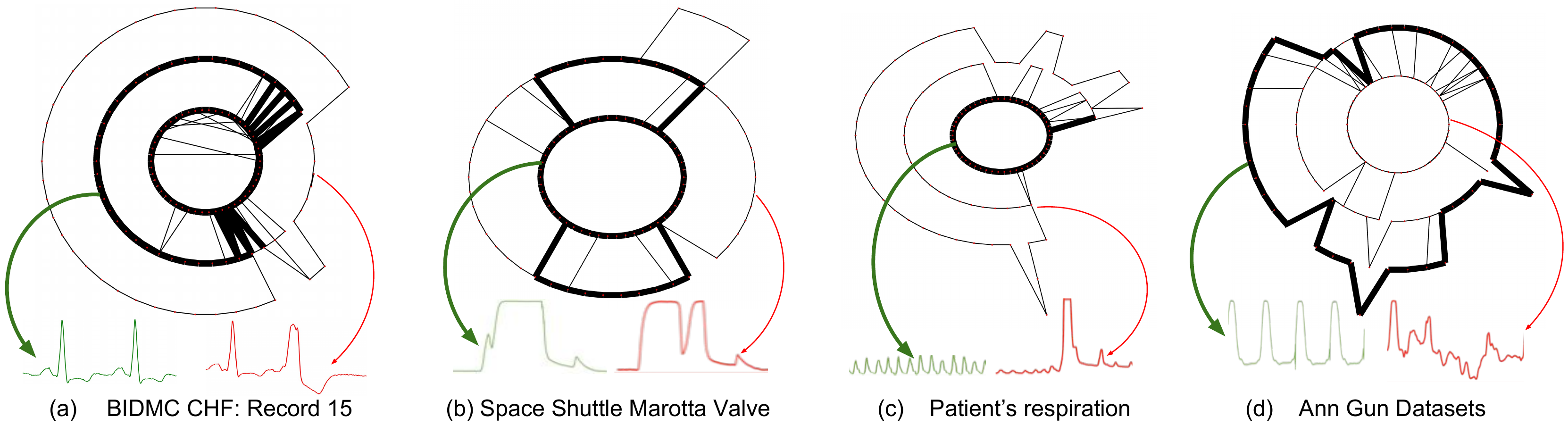}
	\vspace*{-0.4cm}
	\caption{(a) $G_{80}$ for BIDMC Congestive Heart Failure Database: Record 15, (b) $G_{200}$ for Space Shuttle Marotta Valve (TEK 16), (c) $G_{50}$ Patient's respiration, (d) $G_{150}$ Ann Gun Datasets. Green subsequences belong to $\theta$-Normality subgraph with large $\theta$. Red subsequences belong to $\theta$-Anomaly subgraph with small $\theta$.}
	\label{fig:discord_finding}
\end{figure*}

The next experiments evaluate the capability of Series2Graph to identify discords. 
We used the datasets that have appeared in the discord discovery literature (mentioned above). 
Figure~\ref{fig:discord_finding} shows the graphs obtained for Ann Gun, Marotta Valve, Patient's respiration, and Record 15 of the BIDMC Congestive Heart Failure Database. The thickness of the lines correspond to the weights of the edges.
This figure shows that the discords of these datasets (red subsequences) always correspond to trajectories with low weights ($\theta$-Anomaly, for a small $\theta$ value), whereas the normal subsequences (green subsequences) correspond to trajectories with high weights ($\theta$-Normality, for a large $\theta$). 
Therefore, the anomaly scores of the discords are in every case the largest, and hence are correctly identified.

\subsection{Accuracy Evaluation}

In this section, we report the anomaly detection accuracy results. 
We compare Series2Graph to the current state-of-the-art data series subsequence anomaly detection algorithms, using $\ell_q = \ell_A$. 
For Series2Graph, we use the same $\ell=50$ and latent $\lambda=16$ for all datasets, even though different values would be optimal in every case, thus demonstrating the robustness of the approach. 
We also evaluate Series2Graph's learning ability by comparing the accuracy obtained by building the graph using only the first half of the data series ($S2G_{|T|/2}$), compared to using the full data series ($S2G_{|T|}$).
We consider two techniques that enumerate $Top$-$k$ $1^{st}$ discords, namely GrammarViz (GV)~\cite{DBLP:conf/edbt/Senin0WOGBCF15} and STOMP~\cite{DBLP:conf/icdm/YehZUBDDSMK16}. 
Moreover, we compare Series2Graph against the Disk Aware Discord Discovery algorithm (DAD)~\cite{DBLP:conf/icdm/YankovKR07}, which finds $m^{th}$ discords. 
We also compare to Local Outlier Factor (LOF)~\cite{Breunig:2000:LID:342009.335388} and Isolation Forest~\cite{Liu:2008:IF:1510528.1511387}. 
Note that the last two methods are not specific to subsequence anomaly discovery. 
Finally, we include in our comparison LSTM-AD~\cite{LSTManomaly}, a \emph{supervised} deep learning technique. 
We stress that the comparison to LSTM-AD is not fair to all the other techniques: LSTM-AD has to first train on \emph{labeled} data, which gives it an unfair advantage; all the other techniques are \emph{unsupervised}.
We include it to get an indication as to how the unsupervised techniques compare to a state-of-the-art supervised anomaly detection algorithm.

\begin{table*}[tb]
\centering
\hspace*{-0.3cm}
\scalebox{1.1}{
\begin{tabular}{|c|c|c|c|c|c|c|c|c|}
\hline
\textbf{Datasets} & \textbf{GV} & \textbf{STOMP} &  \textbf{DAD} & \textbf{LOF} & \textbf{Isolation Forest (std)} & \textbf{LSTM-AD} & \textbf{$S2G_{|T|/2}$} & \textbf{$S2G_{|T|}$} \\
\hline
SED			 		& 0.46 		& 0.73 		& 0.44 		& 0.65 		& 0.65 (0.02) & 0.10			& {\bf 1.00} 	& {\bf 1.00} \\
MBA (803) 			& 0.15 		& 0.60		& 0.01 		& 0.08 		& {\bf 1.00 (0.00)} & 0.35 			& {\bf 1.00}	& {\bf 1.00} \\
MBA (805)			& 0.09 		& 0.10 		& 0.03  	& 0.42 		& 0.99 (0.01) & 0.85 			& 0.99 			& {\bf 0.99} \\
MBA (806)			& 0.01 		& 0.59 		& 0.66 		& 0.92 & 0.75 (0.06) & 0.10			& 0.96 			& {\bf 1.00} \\
MBA (820)			& 0.05 		& 0.92 		& 0.04  	& 0.42 		& 0.92 (0.03) & 0.09			& 0.76 			& {\bf 0.96} \\
MBA (14046)			& 0.09 		& 0.54 		& 0.71  	& 0.64 		& 0.99 (0.01) & {\bf 1.00}	& 0.95	& 0.95 \\
\hline

SRW-[20]-[0\%]-[200] 	& {\bf 1.0}	& 0.77 		& 0.55  	& 0.74	&  0.75 (0.05) & 0.94		& 0.95			& {\bf 1.00} \\
SRW-[40]-[0\%]-[200] 	& 0.975  	& {\bf1.0} 	& 0.05 		& 0.89 	& 0.92 (0.02) & {\bf1.00}	& {\bf 1.00}	& {\bf 1.00} \\
SRW-[60]-[0\%]-[200] 	& 0.96  	& 0.88  	& 0.10 		& 0.76 	& 0.87 (0.02) & 0.92		& {\bf 1.00}	& {\bf 1.00} \\
SRW-[80]-[0\%]-[200] 	& 0.96 		& 0.43  	&  0.14  	& 0.82 	& 0.86 (0.01) & 0.95		& 0.98			& {\bf 1.00} \\
SRW-[100]-[0\%]-[200] 	& 0.95 		& 0.99  	& 0.11   	& 0.75 	& 0.92 (0.02) & {\bf1.00}	& {\bf 1.00}	& {\bf 1.00} \\
\hline

SRW-[60]-[5\%]-[200] 	& {\bf 1.0}	& 0.73  	& 0.21 		& 0.88 & 0.89 (0.01) & 0.96 		& {\bf 1.00}	& {\bf 1.00} \\
SRW-[60]-[10\%]-[200] 	& 0.83 		& {\bf0.98}	& 0.01  	& 0.70 & 0.80 (0.01) & 0.94		& 0.96			& {\bf 0.98} \\
SRW-[60]-[15\%]-[200] 	& 0.76  	& 0.62  	& 0.17  	& 0.66 & 0.82 (0.01) & 0.94		& {\bf 0.98}	& {\bf 0.98} \\
SRW-[60]-[20\%]-[200] 	& 0.73  	& {\bf1.0}	& 0.01  	& 0.73 & 0.85 (0.02) & 0.96		& {\bf 1.00}	& {\bf 1.00} \\
SRW-[60]-[25\%]-[200] 	& 0.63  	& 0.64 		& 0.09  	& 0.67 & 0.80 (0.01) & 0.83		& {\bf 0.98}	& {\bf 0.98} \\
\hline

SRW-[60]-[0\%]-[100] 	& 0.98  	& {\bf 1.0}	& 0.23 		& 0.74 & 0.88 (0.02) & {\bf1.00} & 0.96	 		& 0.96  \\
SRW-[60]-[0\%]-[200] 	& 0.96  	& 0.60 		& 0.19  	& 0.85 & 0.83 (0.01) & {\bf1.00}	& 0.98			& 0.98 \\
SRW-[60]-[0\%]-[400] 	& 0.98  	& {\bf 1.0}	& 0.63 		& 0.76 & 0.88 (0.01) & 0.88 		& 0.96			& 0.96 \\
SRW-[60]-[0\%]-[800] 	& 0.91  	& 0.86 		& - 		& 0.69 & 0.87 (0.01) & 0.76 		& 0.95			& {\bf 0.98} \\
SRW-[60]-[0\%]-[1600] 	& {\bf 1.0}	& {\bf 1.0}	& -   		& 0.52 & 0.64 (0.02) & 0.90 		& 0.91			& 0.94 \\
\hline
\textbf{Average} 		& 0.62 		& 0.73 		& 0.24  	& 0.68 & 0.85 & 0.78 		& 0.96 			& {\bf 0.98} \\
\hline
\end{tabular}
} 
\caption{$Top$-$k$ accuracy for DAD, STOMP, GrammarViz, LSTM-AD, $S2G_{|T|/2}$ (Series2Graph built using half of the dataset) and $S2G_{|T|}$ (Series2Graph built using the entire dataset) with $k$ equal to number of anomalies.}
\label{scalabilityAccuracy}
\end{table*}

In Table~\ref{scalabilityAccuracy}, we show the $Top$-$k$ accuracy. 
We set $k$ equal to the number of anomalies.
These experiments test the capability of each method to correctly retrieve the $k$ anomalous subsequences in each dataset. 
For Series2Graph, we simply have to report the $Top$-$k$ anomalies that Algorithm~\ref{alg:ScoringAlgo} produces. 
For the \discord based techniques, we have to consider the $Top$-$k$ $1^{st}$ \discord and the $m^{th}$ \discord (with $m=k$). 
Finally, LSTM-AD marks as anomalies the subsequences that have the largest errors (distances) to the sequences that the algorithm predicts; we compute accuracy considering the subsequences with the $k$ largest errors.

In the first section of Table~\ref{scalabilityAccuracy}, we report the results of all techniques on the annotated real datasets with multiple (diverse and similar) anomalies. 
Series2Graph (both built on half and full dataset) is clearly the winner, with nearly perfect accuracy.
As expected, $Top$-$k$ $1^{st}$ \discord techniques (GV and STOMP) have in most of the cases lower accuracy than Series2Graph, since anomalies do not correspond to rare subsequences (i.e., isolated discords). 
We also observe that the $m^{th}$ discord technique (DAD), which is able to detect groups of $m$ similar anomalous subsequences, does not perform well. 
This is due to the many false positives produced by the algorithm. 

In the rest of Table~\ref{scalabilityAccuracy}, we report the accuracy of the evaluated methods on all the synthetic datasets (where we vary the number of anomalies, the \% of Gaussian noise and the anomaly subsequence length $\ell$). 
We note that the accuracy of the \discord discovery techniques substantially improves since in this case, most anomalies correspond to rare and isolated subsequences (i.e., different from one another).
Even in these cases though, Series2Graph is on average significantly more accurate than the competitors. 
Moreover, in contrast to GV, STOMP and DAD, we observe that Series2Graph's performance is stable as the noise increases between 0\%-25\%.

Regarding LSTM-AD, we note that, in general, it is more accurate than the \discord based algorithms. 
However, LSTM-AD cannot match the performance of Series2Graph, and in some cases it completely misses the anomalies (i.e., for the SED, MBA(806) and MBA(820) datasets).
Regarding LOF, we observe that it does not perform well. 
Isolation Forest on the other hand achieves a surprisingly good accuracy, which makes it a strong competitor.

Overall, we observe that regular Series2Graph ($S2G_{|T|}$) is considerably more accurate than all competitors (with rare exceptions, for which its performance is still very close to the best one), in all the settings we used in our evaluation. 

\subsection{Scalability Evaluation}

\begin{figure*}[tb]
  \centering
  \includegraphics[scale=0.83]{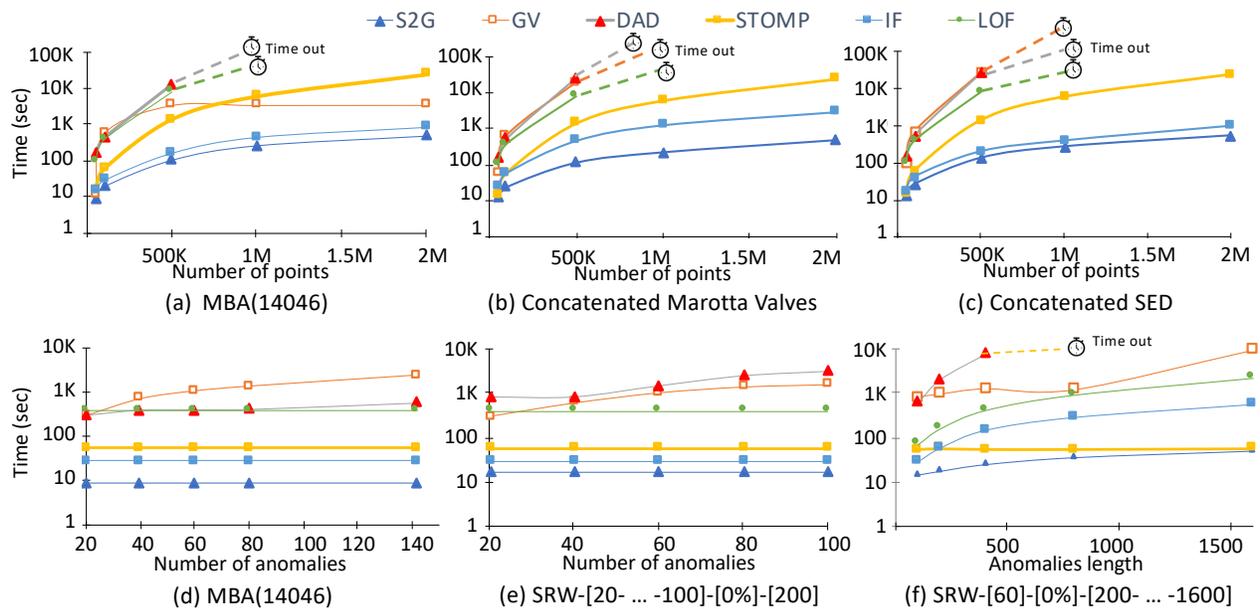}
  \vspace*{-0.4cm}
  \caption{Execution time vs data size (a-c), number of anomalies (d,e), anomaly length (f). Time out: 8h.}
  \label{fig:Scalability_exp}
\end{figure*}

We now present scalability tests (we do not consider LSTM-AD, since supervised methods have a completely different way of operation and associated costs, e.g., data labeling and model training). 
In Figures~\ref{fig:Scalability_exp}(a,b,c), we report the execution time of the algorithms (log scale) versus the size of the dataset. 
We use several prefix snippets (50K, 100K, 500K, 1M, 2M points) of the MBA(14046) dataset, a 2 million points concatenated version of the Marotta Valve dataset, and a 2 million points concatenated version of SED dataset. 
For all three datasets, $k$ is set to be equal to the number of anomalies in each snippet. 
We observe that Series2Graph is faster than the competitors, especially when both the input series length $\ell$ and anomaly subsequence length $\ell_A$ take large values, as in Figure~\ref{fig:Scalability_exp}(b), and gracefully scales with the dataset size. 

We also measure the execution time of the algorithms (log scale) as we vary the number of anomalies. 
We use the MBA(14046) dataset, as well as the synthetic datasets SRW-[20-100]-[0\%]-[200] 
(Figure \ref{fig:Scalability_exp}(e)). 
As expected, we observe that the performance of Series2Graph is not influenced by the number of anomalies. Similarly, STOMP and IF are not affected either, but GrammarViz, LOF and DAD are negatively impacted by the number of anomalies.

Finally, Figure~\ref{fig:Scalability_exp}(f) depicts the time performance results as we vary the length of the anomalies between 100-1600 points in the synthetic data series SRW-[60]-[0\%]-[100-1600]. 
The performance of STOMP is constant, because its complexity is not affected by the (anomaly) subsequence length. 
Moreover, we note that GV, IF, LOF and DAD perform poorly as the length of the anomalies is increasing. 
Observe that the execution time of Series2Graph increases slightly for larger subsequence lengths. 
This is due to the scoring function (last step of the algorithm). 
This function sums up all the edge weights of the subsequences we are interested in. 
Therefore, if the subsequence is large, the number of relevant edges is large as well, which slightly affects computation time. 
Nevertheless, Series2Graph remains the fastest algorithm among all competitors.

\section{Related Work}
\label{sec:related}

\noindent{\bf[Discord Discovery]} 
The problem of subsequence anomaly discovery has been studied by several works that use the $discord$ definition~\cite{DBLP:conf/icdm/YehZUBDDSMK16,DBLP:conf/edbt/Senin0WOGBCF15,Keogh2007,Liu2009,DBLP:conf/adma/FuLKL06,DBLP:conf/sdm/BuLFKPM07,Parameter-Free_Discord,valmodjournal}.
In these studies, anomalies are termed the isolated subsequences, i.e., the ones that have the highest Euclidean distances to their nearest neighbors. 
The proposed solutions either operate directly on the raw values~\cite{DBLP:conf/icdm/YehZUBDDSMK16,Liu2009,DBLP:conf/adma/FuLKL06,Parameter-Free_Discord}, or on discrete representations of the data, e.g., Haar wavelets~\cite{DBLP:conf/adma/FuLKL06,DBLP:conf/sdm/BuLFKPM07}, or SAX~\cite{Keogh2007,DBLP:conf/edbt/Senin0WOGBCF15}.
Even though the information that the discords carry is interesting and useful for some applications, these approaches (that are based on the $discord$ definition) fail when the dataset contains multiple anomalies that are similar to one another.

\noindent{\bf[Multiple \discord Discovery]}
The notion of $m^{th}$ \discord has been proposed in order to solve the issue of multiple similar anomalies~\cite{DBLP:journals/kais/YankovKR08}.
The approach described in this study finds the sequence that has the furthest $m^{th}$ nearest neighbor in Euclidean space.
During the search, a space pruning strategy based on the intermediate results of the simple $discord$ discovery is applied.
As we have already discussed, the $m^{th}$-$discord$ definition fixes the main problem of simple $discord$, but is very sensitive to the $m$ parameter and can lead to false positives. 

\noindent{\bf[Outlier Detection]} 
Local Outlier Factor~\cite{Breunig:2000:LID:342009.335388} is a degree of being an outlier assigned to a multidimensional data point. This degree depends on how much the data point is isolated (in terms of distance) to the surrounding neighborhood.
Similarly, Isolation Forest~\cite{Liu:2008:IF:1510528.1511387} is a classical machine learning technique that isolates anomalies instead of modeling normality. 
It first builds binary trees with random splitting nodes to partition the dataset. 
The anomaly score is defined as a function of the averaged path lengths between a particular sample and the roots of the trees. 
The above two methods are not specifically targeted to data series subsequences anomaly detection, which is reflected in the low accuracy they achieve in several of the datasets we tested.
 
\noindent{\bf [Deep Learning Approaches]} 
Subsequence anomaly detection has also been studied in the context of supervised deep learning techniques, with the use of Long Short Term Memory (LSTM) architectures~\cite{Hochreiter:1997:LSM:1246443.1246450}.
The studies that use this recurrent neural network are based on a forecasting model~\cite{LSTManomaly, DBLP:journals/corr/BontempsCML17}. 
First, the LSTM network is trained using the data segments that do \emph{not} contain anomalies. 
Then, the sequence is examined and the LSTM network is used to forecast the subsequent values: when the error between the forecast and the real value is above some threshold, the subsequence is classified as an anomaly. The system learns the threshold using the validation set, picking the value that maximizes the F1-score of the classification results. While the aforementioned approach originally used the annotated anomalies to learn the threshold, 
the LSTM model has also been used in a zero positive learning framework, where the annotated anomalies are not necessary for the training phase~\cite{DBLP:journals/corr/abs-1801-03168}. 

\noindent{\bf [Phase Space Reconstruction]} 
Phase space reconstruction is a technique that has been used for pattern embedding and for non-linear data series analysis~\cite{Kantz:2003:NTS:1121581, NonLinearRevisited}.
This technique transforms the data series into a set of vectors and has been used to visualize the evolution of the data series. 
Previous studies have also proposed the construction of a complex network based on the phase space reconstruction of a data series~\cite{ComplexNetwork, Gao_2016}, or on its visibility graph~\cite{Lacasa4972}, which can then be used to identify different patterns of interest. 
Series2Graph shares the same goal of converting the data series into a graph in order to reveal significant features. However, differently from the above methods that convert each point of the series into a separate node, Series2Graph uses a single node to represent several subsequences.



\section{Conclusions and Future Work}
\label{sec:concl}

Even though subsequence anomaly detection in data series has attracted a lot of attention, existing techniques have several shortcomings.
In this work, we describe a novel approach, based on a graph representation of the data series, which enables us to detect both single and recurrent anomalies (as well as the normal subsequences), in an unsupervised and domain-agnostic way. 
Experiments with several real datasets demonstrate the benefits of our approach in terms of both efficiency and accuracy. 
As future work, we plan to use modern data series indices~\cite{lernaeanhydra,lernaeanhydra2,parisplus,peng2020messi,evolutionofanindex,conf/sigmod/gogolou20,rtiskeynote} for accelerating the operation of Series2Graph, extend our approach to operate on streaming and multivariate data, and compare to the recently proposed  NorM approach~\cite{DBLP:conf/icde/BoniolLRP20a,DBLP:conf/icde/BoniolLRP20}.

\vspace*{0.3cm}
\noindent{\bf Acknowledgments:}
We thank M. Meftah and E. Remy, from EDF R\&D, for their support and advice, and the reviewers for the numerous constructive comments.
Work partially supported by EDF R\&D and ANRT French program.

\def\thebibliography#1{
  \section*{References}
    \vspace{-2pt}
	\normalsize
  \list
    {[\arabic{enumi}]}
    {\settowidth\labelwidth{[#1]}
     \leftmargin\labelwidth
     \parsep 0pt                
     \itemsep 0pt               
     \advance\leftmargin\labelsep
     \usecounter{enumi}
    }
  \def\newblock{\hskip .11em plus .33em minus .07em}
  \sloppy\clubpenalty10000\widowpenalty10000
  \sfcode`\.=1000\relax
}

\bibliographystyle{abbrv}
\bibliography{anomalies}

\begin{thebibliography}{10}

\bibitem{ourWebsite}
{Series2Graph Webpage}.
\newblock \url{http://helios.mi.parisdescartes.fr/~themisp/series2graph/},
  2020.

\bibitem{dD2019604}
D.~Abboud, M.~Elbadaoui, W.~Smith, and R.~Randall.
\newblock Advanced bearing diagnostics: A comparative study of two powerful
  approaches.
\newblock {\em MSSP}, 114, 2019.

\bibitem{doi:10.1177/1475921710395811}
A.~Abdul-Aziz, M.~R. Woike, N.~C. Oza, B.~L. Matthews, and J.~D. lekki.
\newblock Rotor health monitoring combining spin tests and data-driven anomaly
  detection methods.
\newblock {\em Structural Health Monitoring}, 2012.

\bibitem{doi:10.1117/12.847574}
M.~Ali Abdul-Aziz, N.~Woike, B.~Oza, Matthews, and G.~Baakilini.
\newblock Propulsion health monitoring of a turbine engine disk using spin test
  data, 2010.

\bibitem{IMSGroundtruth}
J.~Antoni and P.~Borghesani.
\newblock A statistical methodology for the design of condition indicators.
\newblock {\em Mechanical Systems and Signal Processing}, 2019.

\bibitem{DBLP:journals/dagstuhl-reports/BagnallCPZ19}
A.~J. Bagnall, R.~L. Cole, T.~Palpanas, and K.~Zoumpatianos.
\newblock Data series management (dagstuhl seminar 19282).
\newblock {\em Dagstuhl Reports}, 9(7), 2019.

\bibitem{31ffca1e82e94ed797d33e02a8a36dff}
S.~Bahaadini, V.~Noroozi, N.~Rohani, S.~Coughlin, M.~Zevin, J.~Smith,
  V.~Kalogera, and A.~Katsaggelos.
\newblock Machine learning for gravity spy: Glitch classification and dataset.
\newblock {\em Information Sciences}, 444:172--186, 5 2018.

\bibitem{statisticaloutliers}
V.~Barnet and T.~Lewis.
\newblock {\em {Outliers in Statistical Data}}.
\newblock {John Wiley and Sons, Inc.}, 1994.

\bibitem{DBLP:conf/icde/BoniolLRP20a}
P.~Boniol, M.~Linardi, F.~Roncallo, and T.~Palpanas.
\newblock Automated anomaly detection in large sequences.
\newblock In {\em {ICDE}}, 2020.

\bibitem{DBLP:conf/icde/BoniolLRP20}
P.~Boniol, M.~Linardi, F.~Roncallo, and T.~Palpanas.
\newblock {SAD:} an unsupervised system for subsequence anomaly detection.
\newblock In {\em {ICDE}}, 2020.

\bibitem{GraphAn}
P.~Boniol, T.~Palpanas, M.~Meftah, and E.~Remy.
\newblock Graphan: Graph-based subsequence anomaly detection.
\newblock {\em {PVLDB}}, 13(11), 2020.

\bibitem{DBLP:journals/corr/BontempsCML17}
L.~Bontemps, V.~L. Cao, J.~McDermott, and N.~Le{-}Khac.
\newblock Collective anomaly detection based on long short term memory
  recurrent neural network.
\newblock {\em CoRR}, abs/1703.09752, 2017.

\bibitem{NonLinearRevisited}
E.~Bradley and H.~Kantz.
\newblock Nonlinear time-series analysis revisited.
\newblock {\em Chaos: An Interdisciplinary Journal of Nonlinear Science}, 2015.

\bibitem{Breunig:2000:LID:342009.335388}
M.~M. Breunig, H.-P. Kriegel, R.~T. Ng, and J.~Sander.
\newblock Lof: Identifying density-based local outliers.
\newblock In {\em SIGMOD}, 2000.

\bibitem{DBLP:conf/sdm/BuLFKPM07}
Y.~Bu, O.~T. Leung, A.~W. Fu, E.~J. Keogh, J.~Pei, and S.~Meshkin.
\newblock {WAT:} finding top-k discords in time series database.
\newblock In {\em SIAM}, 2007.

\bibitem{DBLP:conf/kdd/ChiuKL03}
B.~Y. Chiu, E.~J. Keogh, and S.~Lonardi.
\newblock Probabilistic discovery of time series motifs.
\newblock In {\em SIGKDD 2003}, pages 493--498, 2003.

\bibitem{HelicopterPaperBigData2019}
N.~Daouayry, A.~Mechouche, P.-L. Maisonneuve, V.-M. Scuturici, and J.-M. Petit.
\newblock Data-centric helicopter failure anticipation: The mgb oil pressure
  virtual sensor case.
\newblock IEEE BigData, 2019.

\bibitem{rtiskeynote}
K.~Echihabi, K.~Zoumpatianos, and T.~Palpanas.
\newblock {Scalable Machine Learning on High-Dimensional Vectors: From Data
  Series to Deep Network Embeddings}.
\newblock In {\em {WIMS}}, 2020.

\bibitem{lernaeanhydra}
K.~Echihabi, K.~Zoumpatianos, T.~Palpanas, and H.~Benbrahim.
\newblock The lernaean hydra of data series similarity search: An experimental
  evaluation of the state of the art.
\newblock {\em {PVLDB}}, 2019.

\bibitem{lernaeanhydra2}
K.~Echihabi, K.~Zoumpatianos, T.~Palpanas, and H.~Benbrahim.
\newblock {Return of the Lernaean Hydra: Experimental Evaluation of Data Series
  Approximate Similarity Search}.
\newblock {\em {PVLDB}}, 2019.

\bibitem{Goldbergere215}
G.~et~al.
\newblock Physiobank, physiotoolkit, and physionet.
\newblock {\em Circulation}.

\bibitem{DBLP:conf/icdm/ZhuZSYFMBK16}
Y.~Z. et~al.
\newblock Matrix profile {II:} exploiting a novel algorithm and gpus to break
  the one hundred million barrier for time series motifs and joins.
\newblock In {\em {ICDM} 2016}.

\bibitem{DBLP:conf/adma/FuLKL06}
A.~W. Fu, O.~T. Leung, E.~J. Keogh, and J.~Lin.
\newblock Finding time series discords based on haar transform.
\newblock In {\em ADMA}, 2006.

\bibitem{ComplexNetwork}
Z.-K. Gao and N.~Jin.
\newblock Complex network from time series based on phase space reconstruction.
\newblock {\em Chaos (Woodbury, N.Y.)}, 19:033137, 09 2009.

\bibitem{Gao_2016}
Z.-K. Gao, M.~Small, and J.~Kurths.
\newblock Complex network analysis of time series.
\newblock {\em {EPL} (Europhysics Letters)}, 116(5):50001, dec 2016.

\bibitem{conf/sigmod/gogolou20}
A.~Gogolou, T.~Tsandilas, K.~Echihabi, A.~Bezerianos, and T.~Palpanas.
\newblock {Data Series Progressive Similarity Search with Probabilistic Quality
  Guarantees}.
\newblock In {\em SIGMOD}, 2020.

\bibitem{DBLP:conf/healthcom/HadjemNK16}
M.~Hadjem, F.~Na{\"{\i}}t{-}Abdesselam, and A.~A. Khokhar.
\newblock St-segment and t-wave anomalies prediction in an {ECG} data using
  rusboost.
\newblock In {\em Healthcom}, 2016.

\bibitem{SVDHALKO}
N.~Halko, P.~G. Martinsson, and J.~A. Tropp.
\newblock Finding structure with randomness: Probabilistic algorithms for
  constructing approximate matrix decompositions.
\newblock {\em SIAM Review}, 2011.

\bibitem{Hochreiter:1997:LSM:1246443.1246450}
S.~Hochreiter and J.~Schmidhuber.
\newblock Long short-term memory.
\newblock {\em Neural Comput.}, 9(8):1735--1780, Nov. 1997.

\bibitem{Kantz:2003:NTS:1121581}
H.~Kantz and T.~Schreiber.
\newblock {\em Nonlinear Time Series Analysis}.
\newblock Cambridge University Press, New York, NY, USA, 2003.

\bibitem{Keogh:2004:TPD:1014052.1014077}
E.~Keogh, S.~Lonardi, and C.~A. Ratanamahatana.
\newblock Towards parameter-free data mining.
\newblock In {\em Proceedings of the Tenth ACM SIGKDD International Conference
  on Knowledge Discovery and Data Mining}, KDD '04, pages 206--215, New York,
  NY, USA, 2004. ACM.

\bibitem{Keogh2007}
E.~Keogh, S.~Lonardi, C.~A. Ratanamahatana, L.~Wei, S.-H. Lee, and J.~Handley.
\newblock Compression-based data mining of sequential data.
\newblock {\em Data Mining and Knowledge Discovery}, 2007.

\bibitem{DBLP:conf/icdm/KeoghLF05}
E.~J. Keogh, J.~Lin, and A.~W. Fu.
\newblock {HOT} {SAX:} efficiently finding the most unusual time series
  subsequence.
\newblock In {\em ICDM}, 2005.

\bibitem{Lacasa4972}
L.~Lacasa, B.~Luque, F.~Ballesteros, J.~Luque, and J.~C. Nuno.
\newblock From time series to complex networks: The visibility graph.
\newblock {\em Proceedings of the National Academy of Sciences},
  105(13):4972--4975, 2008.

\bibitem{DBLP:journals/corr/abs-1801-03168}
T.~Lee, J.~Gottschlich, N.~Tatbul, E.~Metcalf, and S.~Zdonik.
\newblock Greenhouse: {A} zero-positive machine learning system for time-series
  anomaly detection.
\newblock {\em CoRR}, abs/1801.03168, 2018.

\bibitem{valmodjournal}
M.~Linardi, Y.~Zhu, T.~Palpanas, and E.~J. Keogh.
\newblock {Matrix Profile Goes MAD: Variable-Length Motif And Discord Discovery
  in Data Series}.
\newblock In {\em {DAMI}}, 2020.

\bibitem{Liu:2008:IF:1510528.1511387}
F.~T. Liu, K.~M. Ting, and Z.-H. Zhou.
\newblock Isolation forest.
\newblock In {\em ICDM}, ICDM, 2008.

\bibitem{Liu2009}
Y.~Liu, X.~Chen, and F.~Wang.
\newblock {Efficient Detection of Discords for Time Series Stream}.
\newblock {\em Advances in Data and Web Management}, pages 629--634, 2009.

\bibitem{Parameter-Free_Discord}
W.~Luo and M.~Gallagher.
\newblock Faster and parameter-free discord search in quasi-periodic time
  series.
\newblock In J.~Z. Huang, L.~Cao, and J.~Srivastava, editors, {\em Advances in
  Knowledge Discovery and Data Mining}, 2011.

\bibitem{LSTManomaly}
P.~Malhotra, L.~Vig, G.~Shroff, and P.~Agarwal.
\newblock Long short term memory networks for anomaly detection in time series.
\newblock 2015.

\bibitem{ThemisPaper2013}
K.~Mirylenka, A.~Marascu, T.~Palpanas, M.~Fehr, S.~Jank, G.~Welde, and
  D.~Groeber.
\newblock Envelope-based anomaly detection for high-speed manufacturing
  processes.
\newblock {\em European Advanced Process Control and Manufacturing Conference},
  2013.

\bibitem{Moody}
G.~B. Moody and R.~G. Mark.
\newblock The impact of the mit-bih arrhythmia database.
\newblock {\em IEEE Engineering in Medicine and Biology Magazine}, 2001.

\bibitem{DBLP:conf/sdm/MueenKZCW09}
A.~Mueen, E.~J. Keogh, Q.~Zhu, S.~Cash, and M.~B. Westover.
\newblock Exact discovery of time series motifs.
\newblock In {\em {SDM} 2009}.

\bibitem{Palpanas:2015:DSM:2814710.2814719}
T.~Palpanas.
\newblock Data series management: The road to big sequence analytics.
\newblock {\em SIGMOD Rec.}, 44(2):47--52, Aug. 2015.

\bibitem{evolutionofanindex}
T.~Palpanas.
\newblock {Evolution of a Data Series Index}.
\newblock {\em {CCIS}}, 1197, 2020.

\bibitem{itisareport}
T.~Palpanas and V.~Beckmann.
\newblock {Report on the First and Second Interdisciplinary Time Series
  Analysis Workshop (ITISA)}.
\newblock {\em {ACM SIGMOD Record}}, 48(3), 2019.

\bibitem{DBLP:journals/pvldb/PelkonenFCHMTV15}
T.~Pelkonen, S.~Franklin, P.~Cavallaro, Q.~Huang, J.~Meza, J.~Teller, and
  K.~Veeraraghavan.
\newblock Gorilla: {A} fast, scalable, in-memory time series database.
\newblock {\em {PVLDB}}, 8(12):1816--1827, 2015.

\bibitem{peng2020messi}
B.~Peng, P.~Fatourou, and T.~Palpanas.
\newblock {MESSI: In-Memory Data Series Indexing}.
\newblock In {\em {ICDE}}, 2020.

\bibitem{parisplus}
B.~Peng, T.~Palpanas, and P.~Fatourou.
\newblock Paris+: Data series indexing on multi-core architectures.
\newblock {\em TKDE}, 2020.

\bibitem{scott:1992}
D.~W. Scott.
\newblock {\em Multivariate Density Estimation. Theory, Practice, and
  Visualization}.
\newblock Wiley, 1992.

\bibitem{DBLP:conf/edbt/Senin0WOGBCF15}
P.~Senin, J.~Lin, X.~Wang, T.~Oates, S.~Gandhi, A.~P. Boedihardjo, C.~Chen, and
  S.~Frankenstein.
\newblock Time series anomaly discovery with grammar-based compression.
\newblock In {\em EDBT}, 2015.

\bibitem{DBLP:journals/tkdd/SeninLWOGBCF18}
P.~Senin, J.~Lin, X.~Wang, T.~Oates, S.~Gandhi, A.~P. Boedihardjo, C.~Chen, and
  S.~Frankenstein.
\newblock Grammarviz 3.0: Interactive discovery of variable-length time series
  patterns.
\newblock {\em {TKDD}}, 2018.

\bibitem{DBLP:conf/vldb/SubramaniamPPKG06}
S.~Subramaniam, T.~Palpanas, D.~Papadopoulos, V.~Kalogeraki, and D.~Gunopulos.
\newblock Online outlier detection in sensor data using non-parametric models.
\newblock In {\em VLDB 2006}, pages 187--198, 2006.

\bibitem{Wordrecognition}
J.~Wang, A.~Balasubramanian, L.~M. de~la Vega, J.~Green, A.~Samal, and
  B.~Prabhakaran.
\newblock Word recognition from continuous articulatory movement time-series
  data using symbolic representations.
\newblock In {\em SLPAT}.

\bibitem{DBLP:conf/icdm/WeiKX06}
L.~Wei, E.~J. Keogh, and X.~Xi.
\newblock Saxually explicit images: Finding unusual shapes.
\newblock In {\em Proceedings of the 6th {IEEE} International Conference on
  Data Mining {(ICDM} 2006), 18-22 December 2006, Hong Kong, China}, pages
  711--720, 2006.

\bibitem{Whitney}
C.~Whitney, D.~Gottlieb, S.~Redline, R.~Norman, R.~Dodge, E.~Shahar,
  S.~Surovec, and F.~Nieto.
\newblock Reliability of scoring respiratory disturbance indices and sleep
  staging.
\newblock {\em Sleep}, November 1998.

\bibitem{DBLP:conf/kdd/YankovKMCZ07}
D.~Yankov, E.~J. Keogh, J.~Medina, B.~Y. Chiu, and V.~B. Zordan.
\newblock Detecting time series motifs under uniform scaling.
\newblock In {\em ACM SIGKDD 2007}.

\bibitem{DBLP:conf/icdm/YankovKR07}
D.~Yankov, E.~J. Keogh, and U.~Rebbapragada.
\newblock Disk aware discord discovery: Finding unusual time series in terabyte
  sized datasets.
\newblock In {\em ICDM}, 2007.

\bibitem{DBLP:journals/kais/YankovKR08}
D.~Yankov, E.~J. Keogh, and U.~Rebbapragada.
\newblock Disk aware discord discovery: finding unusual time series in terabyte
  sized datasets.
\newblock {\em Knowl. Inf. Syst.}, 17(2):241--262, 2008.

\bibitem{DBLP:conf/icdm/YehZUBDDSMK16}
C.~M. Yeh, Y.~Zhu, L.~Ulanova, N.~Begum, Y.~Ding, H.~A. Dau, D.~F. Silva,
  A.~Mueen, and E.~J. Keogh.
\newblock Matrix profile {I:} all pairs similarity joins for time series: {A}
  unifying view that includes motifs, discords and shapelets.
\newblock In {\em ICDM}, pages 1317--1322, 2016.

\end{thebibliography}
\end{document}